\documentclass{article}

\usepackage{microtype}
\usepackage{graphicx}
\usepackage{amsmath}
\usepackage{amssymb}
\usepackage[table,xcdraw]{xcolor}
\usepackage{multirow}
\usepackage{pifont}
\usepackage{booktabs} 
\usepackage{fancyhdr} 

\usepackage{hyperref}
\usepackage[authoryear,square]{natbib}

\usepackage{amsthm}

\usepackage[accepted]{icml2025}

\usepackage{amsmath}
\usepackage{amssymb}
\usepackage{mathtools}
\usepackage[normalem]{ulem}

\usepackage[capitalize,noabbrev]{cleveref}

\theoremstyle{plain}
\newtheorem{theorem}{Theorem}[section]

\theoremstyle{definition}

\newtheorem{assumption}[theorem]{Assumption}
\theoremstyle{remark}

\usepackage[textsize=tiny]{todonotes}
\usepackage{stfloats}

\icmltitlerunning{Self-supervised Adversarial Purification for Graph Neural Networks}

\begin{document}

\twocolumn[
\icmltitle{Self-supervised Adversarial Purification for Graph Neural Networks}



\icmlsetsymbol{equal}{*}
\begin{icmlauthorlist}
\icmlauthor{Woohyun Lee}{skku}
\icmlauthor{Hogun Park}{skku}
\end{icmlauthorlist}

\icmlaffiliation{skku}{Department of Computer Science and Engineering, Sungkyunkwan University, Suwon, South Korea}

\icmlcorrespondingauthor{Hogun Park}{hogunpark@skku.edu}

\icmlkeywords{Machine Learning, ICML}

\vskip 0.3in
]



\printAffiliationsAndNotice{}  


\begin{abstract}
Defending Graph Neural Networks (GNNs) against adversarial attacks requires balancing accuracy and robustness, a trade-off often mishandled by traditional methods like adversarial training that intertwine these conflicting objectives within a single classifier. To overcome this limitation, we propose a self-supervised adversarial purification framework. We separate robustness from the classifier by introducing a dedicated purifier, which cleanses the input data before classification. In contrast to prior adversarial purification methods, we propose GPR-GAE, a novel graph auto-encoder (GAE), as a specialized purifier trained with a self-supervised strategy, adapting to diverse graph structures in a data-driven manner. Utilizing multiple Generalized PageRank (GPR) filters, GPR-GAE captures diverse structural representations for robust and effective purification. Our multi-step purification process further facilitates GPR-GAE to achieve precise graph recovery and robust defense against structural perturbations. Experiments across diverse datasets and attack scenarios demonstrate the state-of-the-art robustness of GPR-GAE, showcasing it as an independent plug-and-play purifier for GNN classifiers. Our code can be found in \url{https://github.com/woodavid31/GPR-GAE}.
\end{abstract}

\section{Introduction}
Graph Neural Networks (GNNs) have become a powerful tool for learning on graph-structured data, with applications in social networks (\citealt{fan2019graph}), biological networks (\citealt{zhang2021graph}), and recommender systems (\citealt{wu2022graph}). Moreover, they have been expanded to domains such as self-supervised learning (\citealt{jung2025balancing}) and explainability (\citealt{kang2024unr}). However, due to their reliance on the underlying graph structure, even small perturbations can drastically degrade their performance, making them highly vulnerable to adversarial attacks (\citealt{xu2019topology}; \citealt{geisler2021robustness}). Defending against such attacks requires balancing accuracy and robustness. Accuracy measures performance on clean data, while robustness reflects resilience to adversarial attacks. Focusing on accuracy can make a model rely on adversarially vulnerable features, while prioritizing robustness may reduce this reliance but harm clean data performance.

Adversarial training (\citealt{goodfellow2015explaining}; \citealt{madry2018towards}), one of the earliest defense methods in the image domain, has been adapted into GNNs (\citealt{xu2019topology}; \citealt{feng2019graph}; \citealt{li2022spectral}; \citealt{zhang2022chasing}; \citealt{gosch2024adversarial}) to enhance robustness under adversarial settings in graphs. However, it inherently intertwines the two conflicting objectives in a single learning framework. This limits the achievable robustness, as pursuing higher robustness often comes at the expense of significant accuracy loss. Moreover, adversarial training often falls short in broad applicability, showing efficiency in a narrow set of GNNs.

On the other hand, adversarial purification~\cite{wu2019adversarial,svdgcn,liboosting2024} provides a viable framework by introducing a separate module—a purifier—that preprocesses the input data to defend against attacks. It focuses on removing adversarial components and purifying the input data, allowing each module to specialize in its own role: the purifier enhances robustness, while the classifier focuses solely on accuracy. This separation provides clear pathways for each module to leverage its strengths independently, avoiding interference between the conflicting objectives of accuracy and robustness. However, existing purification methods do not fully capitalize on this potential, often relying on predefined heuristics without a dedicated training process (\citealt{wu2019adversarial}; \citealt{svdgcn}) or a simple edge detection method (\citealt{liboosting2024}). Heuristic-based approaches lack adaptivity and leave defenses highly vulnerable to carefully crafted attacks (\citealt{mujkanovic2022defenses}), while classifier-derived edge representations do not capture intricate structural differences between clean and adversarial edges, resulting in suboptimal performance.

Based on these observations, our work makes several key contributions to adversarial purification in GNNs. First, we analyze and decouple the conflicting objectives of accuracy and robustness in adversarial training into two components: a classifier for accuracy and a purifier trained independently to restore graph structures. Under the adversarial purification framework, we design a self-supervised training strategy, ensuring the purifier's independence from the classification task and mitigating trade-offs. Second, as the purifier, we introduce GPR-GAE, a novel graph auto-encoder architecture that leverages multiple Generalized PageRank (GPR) filters to capture diverse and unique neighborhood representations. This allows effective distinction between clean and adversarial graph regions by modeling complex structural differences, while enabling accurate encoding and reconstruction of a cleaner graph structure. Third, we adopt a multi-step purification process that iteratively refines the graph with a convergent nature, yielding more precise and effective purification. Finally, through extensive experiments across diverse datasets and attack scenarios, we demonstrate that GPR-GAE excels as a specialized purifier, achieving state-of-the-art performance while serving as a plug-and-play defense module that is compatible with various GNNs.

\subsection{Related Work}
\textbf{Adversarial Training:} \citet{gosch2024adversarial} emphasizes that flexible GNN architectures (\citealt{chien2021adaptive}; \citealt{he2022convolutional}) excel in adversarial training settings due to their ability to learn robust propagation pathways by dynamically weighting different powers of the adjacency matrix. 

\textbf{Robust GNNs: }Some methods forgo adversarial training and focus on designing a robust GNN architecture as a defense. For example, EvenNet (\citealt{lei2022evennet}) ignores odd-hop neighbors and uses an even-polynomial graph filter, while SoftMedianGDC (\citealt{geisler2021robustness}) uses soft-median aggregation with GDC to mitigate the influence of outliers. However, robust GNNs share the same limitations as adversarial training, where accuracy and robustness goals are tightly coupled within the architectural designs, constrained by inherent trade-offs. 

\textbf{Adversarial Purification: }Jaccard-GCN (\citealt{wu2019adversarial}) prunes edges between nodes that have Jaccard similarity below a threshold. SVD-GCN (\citealt{svdgcn}) finds that attacks tend to affect high-rank components of the graph and perform a low-rank approximation to purify the graph. GOOD-AT (\citealt{liboosting2024}) integrates adversarial training and purification by generating adversarial samples using a PGD attack (\citealt{xu2019topology}) on a pre-trained GCN (\citealt{kipf2017semisupervised}) classifier. It encodes edges with features and classifier logit embeddings, training an ensemble of MLP detectors to distinguish between clean (in-distribution) and adversarial (out-of-distribution) edges. During testing, edges flagged as OOD by any detector are pruned.

\section{Preliminary}
\subsection{Problem Formulation and Notation}
We represent a graph \(G\) as $G = (\mathcal{V}, \mathcal{E})$, where \(\mathcal{V}\) is the set of \(N\) nodes and \(\mathcal{E}\) is the set of edges, or equivalently as $(\mathbf{A}, \mathbf{X})$, with \(\mathbf{A} \in \mathbb{R}^{N \times N}\) as the adjacency matrix and \(\mathbf{X} \in \mathbb{R}^{N \times F}\) as the feature matrix, where \(F\) denotes the feature dimension. While a normalized adjacency matrix with self-loops, \(\tilde{\mathbf{A}}_s = \mathbf{D}_s^{-1/2} (\mathbf{A}+\mathbf{I})  \ \mathbf{D}_s^{-1/2}\) is commonly used, our approach adopts \(\tilde{\mathbf{A}}_{ns} = \mathbf{D}^{-1/2}\mathbf{A}\mathbf{D}^{-1/2}\), excluding self-loops. Here, \(\mathbf{D}_s\) and \(\mathbf{D}\) are the respective degree matrices.

This work addresses adversarial attacks on Graph Neural Networks (GNNs) through structural perturbations, where attackers manipulate edges (insertions or deletions) to degrade performance. We denote the perturbed graph as \(G' = (\mathbf{A}', \mathbf{X})\) and seek to recover a structure closer to the clean graph \(G = (\mathbf{A}, \mathbf{X})\). Unlike poisoning attacks targeting the training phase, we focus on \textbf{evasion attacks}, the same problem setting as adversarial training, which occurs at the test phase. Following \citet{gosch2024adversarial}, we employ an \textbf{inductive setting}, excluding validation and test nodes during training to prevent information leakage (memorizing the graph for defense) and ensure fair evaluation.
\subsection{Learning Objective of Adversarial Training}

Adversarial training (\citealt{xu2019topology}; \citealt{li2022spectral}; \citealt{gosch2024adversarial}) enhances the robustness of a GNN classifier \( f_\psi \) against evasion attacks by iteratively generating adversarial examples and training the classifier on them. The overall objective can be formulated as:
\begin{equation}
    \mathcal{L}(\psi) = \max_{G' \in \mathcal{B}(G, \epsilon)} \mathbb{E}_{v \in \mathcal{V}} [\ell(f_\psi(G', v), y_v)]. \label{eq:at_loss}
\end{equation}
Here, \( G' \) is an adversarially perturbed graph sampled from \( \mathcal{B}(G, \epsilon) \), the set of graphs obtained from \( G \) via a bounded number of edge or feature modifications and constrained by the perturbation budget \( \epsilon \). Each node \( v \in \mathcal{V} \) has a true label \( y_v \), and \( \ell \) denotes the loss function. Training alternates between generating the worst-case \( G' \), which maximizes the loss, and updating classifier parameters \( \psi \) to minimize the expected loss over \( G' \).

\subsection{Generalized PageRank Filter}

The Generalized PageRank (GPR) filter, introduced in GPRGNN \cite{chien2021adaptive}, provides a flexible mechanism for dynamically adjusting the contribution of each propagation step during message passing. This is achieved through learnable coefficients \(\gamma_k\), which weigh the \(k\)-th power of the normalized adjacency matrix \(\tilde{\mathbf{A}}_s^k\). The propagation rule is expressed as $\mathbf{H} = \sum_{k=0}^{K} \gamma_k \left( \tilde{\mathbf{A}}_s^k \mathbf{H}^{(0)} \right),$
where in GPRGNN, \(\mathbf{H}^{(0)}\) is the initial feature-driven logit embeddings. By learning the coefficients \(\gamma = [\gamma_0, \dots, \gamma_K]\), the GPR filter flexibly adapts the contribution of information from different neighborhood ranges, up to $K$-hops. This adaptability allows GPR filters to dynamically capture and aggregate neighborhood information, being suitable for both homophilic and more complex heterophilic graphs.

\section{Motivations}
\subsection{Self-supervised Adversarial Purification}
We discuss the motivations behind our use of a self-supervised adversarial purification framework for defending against adversarial attacks. We first analyze the learning objective of adversarial training and highlight the inherent trade-off between accuracy and robustness. The adversarial training objective can be decomposed as follows:
\begin{theorem}[Decomposition of Adversarial Training]
\label{theorem:atdecomp}
    Given disjoint sets of nodes \( \mathcal{V}_\text{unaffected} \subseteq \mathcal{V} \) and \( \mathcal{V}_\text{affected} \subseteq \mathcal{V} \), representing those unaffected and affected by adversarial perturbations in G', respectively, such that \( \mathcal{V}_\text{unaffected} \cup \mathcal{V}_\text{affected} = \mathcal{V} \), let \( \lambda = \frac{|\mathcal{V}_\text{unaffected}|}{|\mathcal{V}|} \), where \( \lambda \in [0, 1] \), denote the proportion of unaffected nodes. The adversarial training objective for a GNN classifier $f_{\psi}$ can be decomposed as follows:
    \begin{align*}
        \mathcal{L}(\psi) &= \lambda \cdot \underbrace{\mathbb{E}_{v \in \mathcal{V}_\text{unaffected}} \big[ \ell(f_\psi(G, v), y_v) \big]}_{\text{Accuracy Term}} \\
        &+ (1-\lambda) \cdot \max_{G' \in \mathcal{B}(\mathcal{G,\epsilon})} \underbrace{\mathbb{E}_{v \in \mathcal{V}_\text{affected}} \big[ \ell(f_\psi(G', v), y_v) \big]}_{\text{Robustness Term}}.
    \end{align*}
\end{theorem}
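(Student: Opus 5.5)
The plan is to start from the adversarial training objective in Eq.~\eqref{eq:at_loss} and split the expectation over $\mathcal{V}$ along the partition $\mathcal{V} = \mathcal{V}_\text{unaffected} \cup \mathcal{V}_\text{affected}$. Since the two sets are disjoint and $\mathbb{E}_{v\in\mathcal{V}}$ is the uniform average, the law of total expectation gives, for any fixed $G'$,
\begin{align*}
\mathbb{E}_{v \in \mathcal{V}}\big[\ell(f_\psi(G',v),y_v)\big]
&= \lambda\, \mathbb{E}_{v \in \mathcal{V}_\text{unaffected}}\big[\ell(f_\psi(G',v),y_v)\big] \\
&\quad + (1-\lambda)\, \mathbb{E}_{v \in \mathcal{V}_\text{affected}}\big[\ell(f_\psi(G',v),y_v)\big],
\end{align*}
with $\lambda = |\mathcal{V}_\text{unaffected}|/|\mathcal{V}|$. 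If one of the sets is empty, the corresponding term carries weight $0$ and is simply dropped.

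Next, I would use the defining property of $\mathcal{V}_\text{unaffected}$: these are exactly the nodes whose predictions the perturbation does not change. Formally, I would take the definition to be $f_\psi(G',v) = f_\psi(G,v)$ for every $G' \in \mathcal{B}(G,\epsilon)$ under consideration, which holds for instance when the perturbed edges lie outside $v$'s $K$-hop receptive field. Replacing $G'$ by $G$ in the first term makes it independent of $G'$.

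Finally, I would pull the maximum through. The first term is now constant in $G'$, so
\[
\max_{G'}\big(c + (1-\lambda)\,h(G')\big) = c + (1-\lambda)\max_{G'} h(G'),
\]
using $1-\lambda \ge 0$. This gives the stated decomposition.

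The main obstacle is making the second step rigorous. The sets $\mathcal{V}_\text{unaffected}$ and $\mathcal{V}_\text{affected}$ depend on $G'$, so in general $\lambda$ would vary inside the max. I would handle this by fixing the partition beforehand, for example as the nodes whose receptive field is untouched by every perturbation in the budget, so that $\lambda$ and the accuracy term are uniform over $\mathcal{B}(G,\epsilon)$. Alternatively, I would read the statement as holding at the maximizing $G'$, where the partition is induced by that maximizer and the identity then holds by the splitting computation alone. I would state this interpretation explicitly before carrying out the three steps.
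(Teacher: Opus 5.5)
Your main argument is the paper's proof: split the uniform average over $\mathcal{V}$ along the partition, replace $G'$ by $G$ on $\mathcal{V}_\text{unaffected}$, and let the maximum act only on the affected term. At the last step you are more careful than the paper. The paper simply asserts that on $\mathcal{V}_\text{affected}$ the loss sum under $G'$ equals $\max_{G'}$ of that sum, and never addresses that the partition depends on $G'$. Your partition is instead chosen to hold uniformly over $\mathcal{B}(G,\epsilon)$. That makes the first term a constant, so $\max_{G'}\big(c+(1-\lambda)h(G')\big)=c+(1-\lambda)\max_{G'}h(G')$ is legitimate.

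Do not fall back on your second reading. The claim that it ``holds by the splitting computation alone'' is false. If the partition is induced by the maximizer $G^\star$ of the full objective, splitting gives
\[
\mathcal{L}(\psi)=\lambda\,\mathbb{E}_{v\in\mathcal{V}_\text{unaffected}}\big[\ell(f_\psi(G,v),y_v)\big]+(1-\lambda)\,\mathbb{E}_{v\in\mathcal{V}_\text{affected}}\big[\ell(f_\psi(G^\star,v),y_v)\big].
\]
The last expectation is in general only $\le \max_{G'}\mathbb{E}_{v\in\mathcal{V}_\text{affected}}\big[\ell(f_\psi(G',v),y_v)\big]$. Another admissible $G''$ can raise the affected nodes' losses further while lowering the losses of nodes that $G^\star$ left untouched. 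Such a $G''$ loses on the full average but wins on the affected subset. For example, take two nodes $a,b$ and two candidate single-edge perturbations. $G^\star$ moves $a$'s loss from $0$ to $1$ and leaves $b$ at $0.5$; $G''$ moves $a$ to $1.2$ and $b$ to $0.2$. Then $\mathcal{L}(\psi)=0.75$, but the claimed right-hand side is $0.85$. This is exactly the step the paper leaves unjustified.

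Be aware also of a limitation of your uniform reading. With a global budget, essentially every node can be reached by some admissible perturbation, so $\mathcal{V}_\text{unaffected}$ is typically empty and $\lambda=0$. The identity is then correct but reduces to the definition of $\mathcal{L}(\psi)$. That is a weakness of the statement, not of your argument.
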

\begin{proof}
    Provided in Appendix~\ref{sec:proof_atdecomp}.
\end{proof}

In particular, a node \( v \) is considered affected by an adversarial perturbation if a modification exists within its local propagation boundary. For instance, in a 2-layer GCN \citep{kipf2017semisupervised}, a node \( v \) is affected if there are perturbations within its two-hop neighborhood.

The decomposition separates the adversarial training objective into two distinct terms: accuracy and robustness. Given that the local structure of node $v$ is denoted as $S_v$ and the perturbed local structure as $S'_v$, the accuracy term focuses on learning the mapping from $(v, S_v)$ to the true label $y_v$, while the robustness term focuses on learning the mapping from $(v, S'_v)$ to $y_v$. Meanwhile, within the decomposition, the attack budget $\epsilon$ indirectly determines $|\mathcal{V}_\text{unaffected}|$, controlling the value of $\lambda$ and emphasis on each term. 

However, excessive emphasis on the robustness term can introduce perturbations that drastically alter the nodes' semantics, rendering the mapping from $(v, S'_v)$ to $y_v$ meaningless for learning useful representations. The coexistence of such potentially detrimental learning within the overall objective can hinder the learning of meaningful representations from the clean graph structure, ultimately constraining performance on clean data. As a result, most adversarial training methods keep the attack budget during training relatively low to preserve the clean accuracy. Consequently, it inherently limits the level of robustness that can be achieved through training, leaving the classifier vulnerable to stronger attacks that exceed the restricted budget.

The limitations in adversarial training motivates a fundamental redesign: \textit{can we decouple the objectives of accuracy and robustness, assigning them to specialized modules that address each goal more effectively?}

Therefore, we propose a self-supervised adversarial purification framework with clear efforts to fully decouple the accuracy and robustness objectives. The two are explicitly handled and learned in separate modules with distinct specializations. This contrasts with previous adversarial purification approaches, which either replace the purifier with predefined heuristics or make the purifier dependent on the classifier's samples and representations.

The \textbf{accuracy objective} is assigned to a standard GNN classifier, \( f_\psi \), which focuses solely on learning accurate representations from the clean graph \( G \). This is reflected in the following typical supervised learning objective:
\begin{equation}
    \mathcal{L}(\psi) = \mathbb{E}_{v \in \mathcal{V}} \big[ \ell(f_\psi(G, v), y_v) \big].
\label{eq:acc}
\end{equation}
The \textbf{robustness objective}, in turn, is assigned to a separate purifier model, \( f_\theta \), which does not learn to predict node labels. Instead, it focuses on reconstructing the clean graph \( G \) from perturbed inputs \( G' \), which naturally aligns with the following self-supervised objective:
\begin{equation}
    \mathcal{L}(\theta) = \ell(f_\theta(G'), G), \quad G' \sim \mathcal{B}(G, \epsilon).
\label{eq:robust}
\end{equation}
In our self-supervised approach, unlike adversarial training's worst case samples, we randomly choose $G'$ from the sample space $\mathcal{B}$, with a relatively large training budget $\epsilon$. This allows the purifier $f_{\theta}$ to learn general purification directions across a broader range of perturbations around $G$, enabling more robust and adaptive purifications.

\subsection{Convergent Multi-Step Purification Framework}

Traditional purification methods often use a single-step process, modifying the perturbed graph $G'$ by pruning or retaining edges based on fixed criteria. Although simple, these methods make abrupt changes with rough discretization of the graph, limiting their ability to precisely recover clean graph structures under complex or heavy perturbations.

The proposed \textbf{multi-step purification} framework addresses these limitations by iteratively refining the graph through a dynamic, data-driven approach. Instead of abrupt corrections, the purifier model $ f_\theta $ progressively adjusts the graph structure using the learned purification directions, adapting to the graph's current state at each step. This gradual refinement not only enhances recovery accuracy but also enables the method to handle severe perturbations more effectively. Moreover, a convergent nature is crucial in multi-step purification in order to ensure stability and prevent oscillatory or divergent behavior during iterative refinement. The following theoretical analyses formalize the convergent behavior of our self-supervised multi-step purification framework:

\begin{assumption}
\label{assump:lipschitz}
Under optimal self-supervised training over a sufficiently large and continuous perturbation space $\mathcal{B}(G, \epsilon)$ around the clean graph $G$, the purification model $f_\theta$ behaves as a locally Lipschitz continuous function with Lipschitz constant $L \in [0,1)$.
\end{assumption}

The local Lipschitz behavior of $f_\theta$ arises from the self-supervised training objective, which minimizes the reconstruction error $\|f_\theta(G') - G\|$ for perturbed inputs within $\mathcal{B}(G, \epsilon)$ while enforcing $f_\theta(G) = G$ for the clean graph.  
Under optimal training, the model is encouraged to satisfy $\|f_\theta(G') - f_\theta(G)\| \leq p \|G' - G\|$ for some small constant $p \in [0,1)$, implying that $f_\theta$ approximates a locally Lipschitz continuous mapping with Lipschitz constant $L \in [0,1)$ around $G$. In the inductive setting, the test graph is semantically similar to $G$, and thus the locally Lipschitz property of $f_\theta$ extends naturally to perturbation neighborhoods around the test graph. Empirical verification of the assumption under various dataset is shown in Section~\ref{sec:empirical_analysis}.

\begin{theorem}[Convergence of Multi-Step Purification]
\label{theorem:multistep}
Let $G^{(0)} = G'$ be an adversarially perturbed graph, and let $f_\theta$ be a purification model trained and applied in a self-supervised manner.
At each step $t$, define the purification direction as $\Delta G^{(t)} = f_\theta(G^{(t)}) - G^{(t)},$
and update the graph iteratively as $G^{(t+1)} = G^{(t)} + \alpha \cdot \Delta G^{(t)},$ where $\alpha \in (0, 1]$ is the step size.  
Given Assumption~\ref{assump:lipschitz}, the sequence $(G^{(t)})_{t \geq 0}$ converges linearly to a stationary point $G^*$.
\end{theorem}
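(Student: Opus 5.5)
The plan is to recast the update as a fixed-point iteration of a single map and apply the Banach contraction principle. Define
\[
T(H) = (1-\alpha) H + \alpha f_\theta(H),
\]
so that the update rule of Theorem~\ref{theorem:multistep} reads $G^{(t+1)} = T(G^{(t)})$. A point $G^*$ is stationary, meaning $\Delta G^* = f_\theta(G^*) - G^* = 0$, exactly when it is a fixed point of $T$. This uses only $\alpha>0$.

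The first step is to show that $T$ is a contraction on the local region where Assumption~\ref{assump:lipschitz} applies. For $H_1,H_2$ in that neighborhood, the triangle inequality and the Lipschitz bound give
\[
\|T(H_1)-T(H_2)\| \le (1-\alpha)\|H_1-H_2\| + \alpha L \|H_1-H_2\| = q\,\|H_1-H_2\|,
\]
where $q = 1-\alpha(1-L)$. Since $\alpha\in(0,1]$ and $L\in[0,1)$, we have $q\in[L,1)$.

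The second step is the standard Banach argument:
\begin{itemize}
\item Successive differences satisfy $\|G^{(t+1)}-G^{(t)}\|\le q^t\|G^{(1)}-G^{(0)}\|$.
\item Summing the geometric tail shows the sequence is Cauchy, so it converges to some $G^*$ in the (closed, complete) region.
\item Continuity of $f_\theta$, which is implied by the Lipschitz property, gives $T(G^*)=G^*$, so $\Delta G^* = 0$.
\item Linear convergence follows from $\|G^{(t)}-G^*\| \le q^t\|G^{(0)}-G^*\|$, or equivalently from the a priori bound $\|G^{(t)}-G^*\| \le \frac{q^t}{1-q}\|G^{(1)}-G^{(0)}\|$.
\end{itemize}
Uniqueness of $G^*$ within the neighborhood also follows from the contraction.

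I expect the main obstacle to be the \emph{locality} of the Lipschitz assumption. Banach's theorem needs $T$ to map a closed set into itself, and I have to confirm the iterates never leave the ball $\mathcal{B}$ on which $L<1$ holds. My approach is to take a closed ball $\bar B(G_c, r)$ around a center $G_c$ and require that $T$ moves the center only a little, $\|T(G_c)-G_c\| \le (1-q)r$. Under optimal training, $G_c=G$ (or its test-time analogue) is nearly a fixed point, so $\|T(G_c)-G_c\|=\alpha\|f_\theta(G_c)-G_c\|\approx 0$ and this requirement holds. Then for any $H$ in the ball,
\[
\|T(H)-G_c\| \le \|T(H)-T(G_c)\| + \|T(G_c)-G_c\| \le qr + (1-q)r = r,
\]
so the ball is invariant. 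Starting from $G^{(0)}=G'$ inside the ball, which holds because the attack budget makes $G'$ lie within the perturbation space, every iterate stays in the ball and the argument above applies.

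A secondary subtlety is that graphs are discrete. I will treat $G^{(t)}$ as continuous (weighted) adjacency matrices in $\mathbb{R}^{N\times N}$ with, say, the Frobenius norm. This matches the "continuous perturbation space" wording of Assumption~\ref{assump:lipschitz}, and it makes the closed ball complete.
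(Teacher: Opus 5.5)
Your proposal is correct and follows the paper's proof. The paper likewise rewrites the update as $h_\theta(G)=(1-\alpha)G+\alpha f_\theta(G)$, handling $\alpha=1$ as a separate case that your single map $T$ already covers. It obtains the contraction factor $1-\alpha(1-L)$ from the triangle inequality and then applies Banach's fixed-point theorem. Your invariant-ball argument goes further than the paper: using the extra near-fixed-point condition $\|f_\theta(G_c)-G_c\|\le(1-L)r$, it shows the iterates stay inside the region where the local Lipschitz bound holds, a step the paper leaves implicit.
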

\begin{proof}
    Provided in Appendix~\ref{sec:proof_multistep}.
\end{proof}
The theorem guarantees that, under the local Lipschitz condition induced by the self-supervised training objective, the iterative purification process converges to a stable graph structure regardless of the step size. Importantly, this convergence is governed by the behavior of the purification model in isolation. In contrast, incorporating external signals, such as arbitrary classifier outputs, may interfere with the stability of the refinement process. The proposed decoupling of the purifier from the classifier is therefore essential for ensuring both theoretical convergence and robustness against adversarial attacks.

\subsection{Designing the Purifier}
\label{sec:gprmotivation}
To effectively differentiate clean from attacked graph regions while adapting to diverse structures, purification methods must well leverage structural properties such as neighborhood relationships or higher-order dependencies. Graph auto-encoders (GAEs) are naturally suited for this task, as they encode and reconstruct graph structures, modeling both node and edge relationships. However, traditional GAEs face two key challenges in adversarial settings:

\textbf{Adversarial Smoothing:} Perturbations in the graph can propagate through the structure, blurring distinctions between clean and attacked regions. Traditional GAEs rely on static GNN layers (\citealt{kipf2017semisupervised}; \citealt{hamilton2017inductive}; \citealt{xu2018how}) for neighbor aggregation, which inadvertently smooth local variations. This causes adversarially connected nodes to become similar in representation, reducing the model’s ability to differentiate clean and attacked regions. Moreover, standard GAE decoders (\citealt{pan2018adversarially}; \citealt{li2023s}) often use proximity-based measures such as inner or element-wise products, further exacerbating this issue by reconstructing spurious connections.

\textbf{Difficulties in Higher-Order Representations:} Capturing higher-order structural information in GAEs typically requires stacking multiple GNN layers. However, as the number of layers increases, oversmoothing occurs (\citealt{rusch2023survey}), where node representations across the graph become indistinguishable. This limits traditional GAEs to low-order structural properties, further reducing their efficacy in purifying attacked graphs.
\begin{figure}[t]
    \vskip 0in
    \centering
    \ \
    \includegraphics[width=0.9\linewidth]{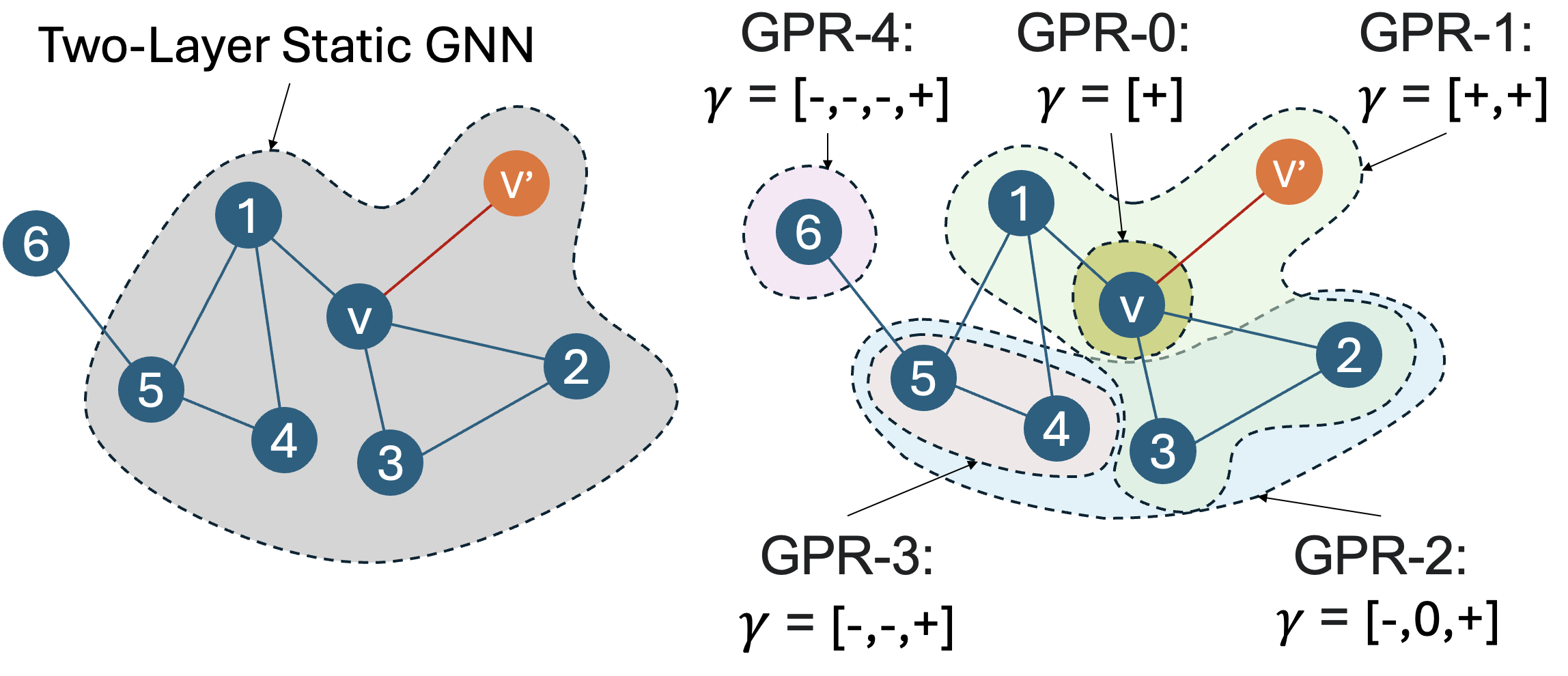}
    \caption{\textbf{Conceptual Comparison:} Node \( v \)'s receptive fields (dotted lines) when adversarially linked to node \( v' \) (from a different community). \textit{Left:} Two-layer static GNN. \textit{Right:} Five GPR filters, each with unique coefficients \( \gamma = [\gamma_0, \dots, \gamma_K] \) of varying scales. \(\gamma_m\) controls neighbors m-hop away from node \(v\). In GPR-2, self-suppression (\( \gamma_0 < 0 \)) and two-hop neighborhood emphasis (\( \gamma_2 > 0 \)) form a boundary around nodes \{2,3,4,5\}. GPR-3 further suppresses one-hop neighborhood (\( \gamma_1 < 0 \)), excluding nodes 2 and 3, to form a boundary with nodes \{4,5\}.}
    \label{fig:gprvsgae}
\end{figure}

Motivated by these limitations, we propose GPR-GAE, a novel graph auto-encoder architecture that utilizes diverse representations from multiple GPR filters, addressing the challenges of traditional GAEs while enhancing the structural capabilities. The key intuitions are as follows:

\textbf{i) Flexible Neighborhood Aggregation:}  
GPR filters employ learnable coefficients \( \gamma \) to selectively aggregate neighborhood information from different ranges. Unlike stacking static GNN layers, this approach prevents oversmoothing, preserving node-level distinctions while effectively capturing higher-order structural dependencies. 

\textbf{ii) Diverse Structural Aspects:} As illustrated in Figure~\ref{fig:gprvsgae}, each GPR filter captures unique structural neighboring aspects of nodes through learnable continuous coefficients, with their sign and magnitude enabling fine-grained emphasis or suppression of the corresponding neighborhoods. By leveraging multiple unique GPR filters with varying sizes, GPR-GAE can utilize diverse, multi-scale structural representations. This allows modeling of complex structures while mitigating adversarial smoothing by avoiding reliance on single representations derived from fixed aggregations. 

We further extend our motivations in Appendix \ref{sec:motivation_extend}, empirically showing how GPR-GAE moves beyond the simple proximity focus of traditional GAEs to capture complex structural properties. The appendix demonstrates GPR-GAE's ability to leverage higher-order information and distinguish between proximal connections, highlighting its advanced structural encoding capabilities and forming a foundation for its application in adversarial purification.

\section{GPR-GAE}
\subsection{Node Encoding}
We employ \( K+1 \) distinct GPR filters to capture multi-scale structural information across varying neighborhood sizes. The output of the \( k \)-th GPR filter, \( H_{\theta_k} \) (\( k = 0, \dots, K \)), is:
\begin{equation}
{\mathbf{H}_{\theta_k}} = \sum_{m=0}^{k} \gamma_{k,m} ({\tilde{\mathbf{A}}_{ns}}^m \mathbf{H}^{(0)}),\quad \mathbf{H}^{(0)} = \mathbf{X}\cdot{\mathbf{W}}_{n}.
\label{eq:gprgae}
\end{equation}
Here, \( \gamma_{k} = [\gamma_{k,0}, \gamma_{k,1}, \dots, \gamma_{k,k}] \) are the learnable coefficients for the \( k \)-th GPR filter, where \( \gamma_{k,m} \) weighs the contribution of the \( m \)-hop neighborhood. \( \mathbf{H}^{(0)} \in \mathbb{R}^{N\times Z_1} \) denotes the initial linearly transformed node embeddings, where \(Z_1\) is the embedding dimension. Note that \(\mathbf{H}_{\theta_0}=\mathbf{H}^{(0)}\) is fixed as the initial node embeddings. We use a normalized adjacency matrix without self-loops (\( \tilde{\mathbf{A}}_{ns} \)) to reduce potential excessive self-influence, promoting aggregation from distinct neighborhoods for more discriminative representations.

The final encoded node embedding, denoted as ${\mathbf{H}}^{\text{\tiny GPRGAE}}\in \mathbb{R}^{N\times(K+1)\cdot Z_1}$, is formed by concatenating the representations from all K+1 GPR filters:
\begin{equation}
{{\mathbf{H}}^{\text{\tiny GPRGAE}}} = \big|\big|_{k=0}^{K} \mathbf{H}_{\theta_k}.
\end{equation}
This concatenation creates a rich, multi-scale representation for downstream tasks to effectively utilize.
\subsection{Edge Encoding}
The edge representation \(\mathbf{E}_{i,j} \in \mathbb{R}^{Z_2}\), where \(Z_2\) is the edge embedding dimension, is computed as:
\begin{equation} 
 \mathbf{E}_{i,j} = \phi(\mathbf{H}_i^{\text{\tiny GPRGAE}} || \mathbf{H}_j^{\text{\tiny GPRGAE}})\cdot \mathbf{W}_e,
\label{eq:lp}
\end{equation}
where the concatenation of encoded node embeddings of $i$ and $j$ are transformed by an activation function $\phi$ and a multiplication by a learnable weight matrix $\mathbf{W}_e$.

\subsection{Edge Decoding}
The encoded edge embeddings are order-variant, where $\mathbf{E}_{i,j}$ and $\mathbf{E}_{j,i}$ are two different representations. So we first compute the directed link prediction score:
\begin{equation}
\hat{\mathbf{A}}_{i \rightarrow j} = \sigma(\phi(\mathbf{\mathbf{E}}_{i,j})\cdot \mathbf{W}_d),
\end{equation}
where $\mathbf{W}_d \in \mathbb{R}^{Z_2 \times 1}$ is a learnable weight vector and $\sigma$ is the sigmoid function. Given that our work focuses on undirected graphs, the final link prediction score $\hat{\mathbf{A}}_{ij}$ is obtained by averaging the bidirectional predictions:
\begin{equation}
\hat{\mathbf{A}}_{ij} = \frac{\hat{\mathbf{A}}_{i \rightarrow j} + \hat{\mathbf{A}}_{j \rightarrow i}}{2}.
\label{eq:decode_undirected}
\end{equation}
The resulting $\hat{\mathbf{A}}_{ij}$ values form the predicted adjacency matrix $\hat{\mathbf{A}}$, where each entry represents the probability of an edge existing between the corresponding nodes.


\section{Adversarial Purification}
\subsection{Perturbed Graph Sampling for Training}

To sample perturbed graphs for training, we define the sample space \( \mathcal{B}(G, (p, q, \eta)) \), where we apply three random perturbation methods sequentially to $G$. The parameters \( p, q, \eta \) represent the budgets for each perturbation method.

\textbf{i) Edge Injection.}  
We begin by injecting new edges into the graph \( G = (\mathcal{V}, \mathcal{E}) \). Specifically, we uniformly sample a set of edges \( \mathcal{E}_{\text{inject}} \subseteq \mathcal{E}^C \), where \( \mathcal{E}^C \) denotes the set of edges not present in \( \mathcal{E} \). The number of injected edges is determined by the injection ratio \( p \), such that \( |\mathcal{E}_{\text{inject}}| = p \cdot |\mathcal{E}| \). These edges are added to the graph, resulting in edge set \( \mathcal{E}' = \mathcal{E} \cup \mathcal{E}_{\text{inject}} \).

\textbf{ii) Edge Masking.}  
Next, we mask a subset of edges from the modified graph. The masked edge set is defined as \( \mathcal{E}_{\text{mask}} = \mathcal{E}_{\text{mask\_orig}} \cup \mathcal{E}_{\text{mask\_inj}} \), where:
\begin{itemize}
    \item \( \mathcal{E}_{\text{mask\_orig}} \subseteq \mathcal{E} \) consists of original edges uniformly sampled from \( \mathcal{E} \), with \( |\mathcal{E}_{\text{mask\_orig}}| = q \cdot |\mathcal{E}| \),
    \item \( \mathcal{E}_{\text{mask\_inj}} \subseteq \mathcal{E}_{\text{inject}} \) consists of injected edges uniformly sampled from \( \mathcal{E}_{\text{inject}} \), with \( |\mathcal{E}_{\text{mask\_inj}}| = q \cdot |\mathcal{E}_{\text{inject}}| \).
\end{itemize}
Edge masking results in an edge set \( \mathcal{E}' = \mathcal{E}' \setminus \mathcal{E}_{\text{mask}} \).

\textbf{iii) Edge Reweighting.}  
Finally, we reweight the remaining edges in \( \mathcal{E}' \). For each edge in \( \mathcal{E}' \), originally weighted as 1, we assign a new weight randomly sampled from the range \( [\beta, \beta \cdot \eta] \), where \( \eta \) is a scaling factor. Note that the specific value of \( \beta \) is inconsequential, as it is normalized during adjacency matrix processing.

\textbf{Resulting Perturbed Graph.}  
The perturbed graph sampled from \( \mathcal{B}(G, (p, q, \eta)) \) is defined as \( G' = (\mathcal{V}, \mathcal{E}') \), where \( \mathcal{E}' \) incorporates the effects of edge injection, masking, and reweighting. Equivalently, \( G' \) can be represented as \( (\mathbf{A}', \mathbf{X}) \), where \( \mathbf{A}' \) is a modified adjacency matrix with the continuous edge weights. While edge injection and edge masking define discrete boundaries for the sample space, edge reweighting smooths these boundaries into continuous space. This enables the learning of more generalized \( \Delta G \), or more specifically, \( \Delta \mathbf{A} = \mathbf{A} - \mathbf{A}' \), providing directions for purifying the graph structure toward its clean state.

\subsection{Learning Objective}  

The learning objective of GPR-GAE is to purify the perturbed graph \( G' \) by restoring its adjacency matrix \( \mathbf{A}' \) to approximate the original \( \mathbf{A} \). The model takes \( G' \) as input and outputs link score predictions \( \hat{\mathbf{A}} \), representing edge likelihoods in \( \mathcal{E}_{all} = \mathcal{E} \cup \mathcal{E}_{inject} \). These predictions are optimized with the following two losses:

\textbf{(i) Restoration Loss.}  
The restoration loss quantifies the model's ability to recover the structure of the original graph. It optimizes the link prediction scores \( \hat{\mathbf{A}} \) for positive edges in \( \mathcal{E} \) and negative edges in \( \mathcal{E}_{inject} \). The loss is defined as:
\begin{align}
\mathcal{L}_{restore} = 
&\frac{1}{|\mathcal{E}|} \sum_{\{i,j\} \in \mathcal{E}} \log(1-\hat{\mathbf{A}}_{ij}) \nonumber \\
&+ \frac{1}{|\mathcal{E}_{inject}|} \sum_{\{i,j\} \in \mathcal{E}_{inject}} \log(\hat{\mathbf{A}}_{ij}).
\end{align}

\textbf{(ii) Symmetry Loss.}  
The symmetry loss ensures consistency in link prediction scores for edges in \( \mathcal{E}_{all} \) regardless of node ordering. It is defined as:
\begin{equation}
\mathcal{L}_{sym} = \frac{1}{|\mathcal{E}_{all}|} \sum_{\{i,j\} \in \mathcal{E}_{all}} \big( \hat{\mathbf{A}}_{i\rightarrow j} - \hat{\mathbf{A}}_{j\rightarrow i} \big)^2.
\end{equation}

The total loss function combines these objectives:
\begin{equation}
\mathcal{L}(\theta) = \mathcal{L}_{restore} + \delta \cdot\mathcal{L}_{sym},
\end{equation}
where \( \delta > 0 \) balances restoration and symmetry losses, ensuring consistency throughout the predictions.

\subsection{Multi-step Purification Process}
\begin{figure}[t!]
    \centering
    \includegraphics[width=0.9\linewidth]{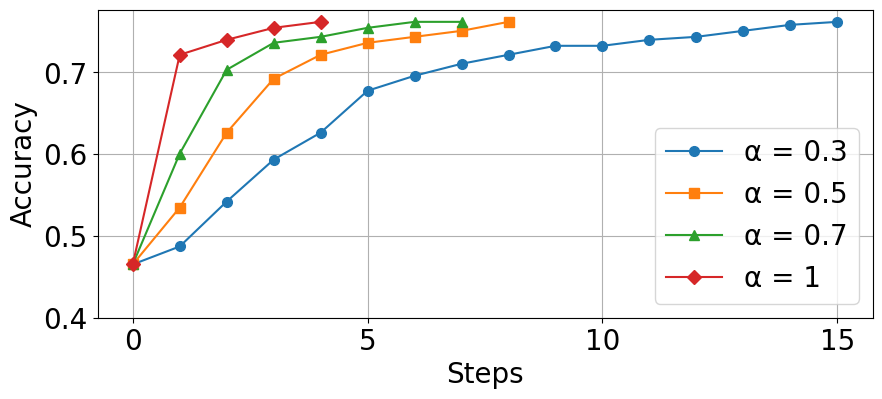}
    \vskip 0in
    \caption{GCN classifier performance on attacked Cora. Node classification accuracy over purification steps using GPR-GAE multi-step purification with $\tau = 1/1000$, and $\alpha \in \{0.3, 0.5, 0.7, 1\}$.}
    \label{fig:cora_multi}
\end{figure}

During the test stage, the purification process is applied to the test graph \( G_{\text{test}} = (\mathbf{A}_{\text{test}}, \mathbf{X}_{\text{test}}) \). At each step \( t \), the edge weights in $\mathbf{A}_{\text{test}}$ are iteratively updated as:
\begin{equation}
\begin{aligned}
    \mathbf{A}^{(t+1)} &= \mathbf{A}^{(t)} + \alpha \cdot \Delta \mathbf{A}^{(t)}, \\
    \Delta \mathbf{A}^{(t)} &= \hat{\mathbf{A}}^{(t)} - \mathbf{A}^{(t)}, \quad 
    \hat{\mathbf{A}}^{(t)} = f_{\theta}(\mathbf{A}^{(t)}, \mathbf{X}_{\text{test}}).
\end{aligned}
\end{equation}

Here, \( \mathbf{A}^{(0)} = \mathbf{A}_{\text{test}} \), and \( \Delta \mathbf{A}^{(t)} \) adjusts the graph structure based on \( \hat{\mathbf{A}}^{(t)} \). The step size \( \alpha \in (0, 1] \) balances retaining the current state and incorporating updates. The process terminates when \( \frac{\|\Delta \mathbf{A}^{(t)}\|}{\|\mathbf{A}^{(t)}\|} \leq \tau \). The final refined adjacency matrix, \( \mathbf{A}^* \), is passed to the GNN classifier \( f_{\psi} \) without discretization, producing the final classification \( f_{\psi}(\mathbf{A}^*, \mathbf{X}_{\text{test}}) \).

Figure \ref{fig:cora_multi} illustrates the accuracy across purification steps using GPR-GAE as the purifier. The classification accuracy improves with each step, demonstrating the effectiveness of the multi-step purification process. The results also show consistent convergence across all \(\alpha\) values under the terminal condition. For practical use, we set \(\alpha = 1\), \(\tau = 1/1000\), and limit the maximum purification steps to 5.

\subsection{Computational Complexity}
In real-world graphs, where the number of nodes is \(N\), the number of edges is \(|\mathcal{E}|\), the feature and the hidden dimension is \(Z\), the complexity of node encoding in GPR-GAE is \(\mathcal{O}(K \cdot |\mathcal{E}| \cdot Z)\), the same as GCN with $K$-layers. For multi-step purification process with the number of maximum purification steps fixed as 5, the complexity is \(\mathcal{O}(K \cdot |\mathcal{E}| \cdot Z^2)\). We provide detailed derivations in Appendix~\ref{sec:complexity_derivation}.


\section{Experiments}
We conducted experiments on various datasets including Cora, Cora\_ML, Citeseer (\citealt{bojchevski2018deep}), Pubmed (\citealt{sen2008collective}), OGB-arXiv (\citealt{hu2020open}), and Chameleon with removed duplicates (\citealt{platonov2023a}). We use an inductive split with 20 labeled nodes per class for train and validation, a stratified test set of 10\% of nodes, and the remaining nodes as unlabeled training data. For Chameleon and OGB-arXiv, we use their provided splits with fully labeled training sets.

\textbf{GPR-GAE.} We set \( K = 7 \), symmetry loss factor \( \delta = 0.2 \), and the training sample budget to (\( p, q, \eta \)) = (1.5, 0.2, 3), with \( q = 0.5 \) tuned for OGB-arXiv. We provide ablation studies of GPR-GAE in Appendix~\ref{sec:ablation}. Additionally, \( \text{GPR-GAE}_{\text{GNN}} \) denotes a GNN classifier performing inference on the purified graph produced by GPR-GAE.

\textbf{Attacks. }PRBCD (\citealt{geisler2021robustness}) is a gradient-based topology attack that perturbs graph structures by optimizing an attack objective over randomized edge blocks, adhering to a defined perturbation budget. Its local constraint variant, LRBCD (\citealt{gosch2024adversarial}), applies restrictions to perturbations within a node's local neighborhood, simulating more realistic attack scenarios. The $\epsilon$ for attacks parametrizes the global budget $\Delta = \left| \epsilon \cdot \sum_{v \in \mathcal{V}_t} d_v/2 \right|$ relative to the degree $d_v$ for the set of targeted nodes $\mathcal{V}_t$.

\begin{table*}[t]
\caption{\textbf{Adaptive attack: }Test accuracy (\%) on Cora for models with different methods: Vanilla (standard training), Self-training (pseudo-labeling), Adversarial training (PRBCD or LRBCD with training $\epsilon = 0.2$), and $\text{GPR-GAE}_{\text{GNN}}$ paired with either vanilla or self-trained classifiers. Robust GNNs (EvenNet, SoftMedianGDC) are compared alongside GCN variants. Results are evaluated under clean conditions and adversarial attacks (LRBCD and PRBCD) with varying budget $\epsilon$. Bold values indicate the best performance within each category, while curly underlined values highlight the second-best performance.}

    \centering
    \vskip 0.1in
    \scriptsize
    \setlength{\tabcolsep}{6.3pt}
    \renewcommand{\arraystretch}{1.03}
    \begin{tabular}{c|c|c|c|c|ccccccc}
    \toprule[1.5pt]
    \multicolumn{2}{c|}{} &  Adv. &Pseu.&(Adv.eval) $\xrightarrow{}$& &\textit{\small LRBCD} &\textit{\small PRBCD} &\textit{\small LRBCD} &\textit{\small PRBCD} &\textit{\small LRBCD} &\textit{\small PRBCD}\\
    \multicolumn{2}{c|}{Model}    & Pur. &Lab.&(Adv.tr) $\downarrow$& Clean & $\epsilon$ = 0.1 & $\epsilon$ = 0.1 & $\epsilon$ = 0.25 & $\epsilon$ = 0.25 & $\epsilon$ = 0.5 & $\epsilon$ = 0.5   \\
    \toprule[1pt]
    \multicolumn{2}{c|}{EvenNet}&\ding{55}&\ding{55}&\ding{55} &81.4$\pm$2.0&65.2$\pm$2.3&65.7$\pm$1.4&54.9$\pm$1.1&51.7$\pm$3.3&45.7$\pm$0.5&35.6$\pm$0.9 \\
    \multicolumn{2}{c|}{SoftMedianGDC}&\ding{55}&\ding{55}&\ding{55} &77.4$\pm$1.8&69.1$\pm$1.9&67.3$\pm$1.9&64.1$\pm$1.4&60.0$\pm$1.4&59.1$\pm$1.2&48.2$\pm$1.1 \\ \hline
    \multirow{4}{*}{GCN}&Vanilla&\ding{55}&\ding{55}&\ding{55}&79.4$\pm$0.7&64.2$\pm$2.3&60.1$\pm$1.0&51.2$\pm$0.6&46.9$\pm$1.5&40.5$\pm$2.2&29.3$\pm$2.3 \\
    &Self-trained&\ding{55}&$\checkmark$&\ding{55}&\textbf{82.5$\pm$0.9}&70.7$\pm$1.6&65.0$\pm$0.9&60.7$\pm$1.7&52.7$\pm$1.2&48.4$\pm$1.1&37.2$\pm$2.1  \\
    &Adv.-trained&\ding{55}&$\checkmark$&\textit{\footnotesize LRBCD} &80.5$\pm$1.8&70.4$\pm$3.2&63.8$\pm$1.5&62.8$\pm$2.7&51.1$\pm$1.4&52.6$\pm$2.8&33.4$\pm$1.6   \\
    &Adv.-trained&\ding{55}&$\checkmark$&\textit{\footnotesize PRBCD}  &80.3$\pm$1.7&69.5$\pm$1.5&64.0$\pm$1.7&61.6$\pm$2.0&51.7$\pm$2.4&51.7$\pm$1.8&34.6$\pm$1.9   \\ \hline
    \rowcolor{gray!15}
    \multicolumn{2}{c|}{${\textbf{GPR-GAE}}_{\textbf{GCN\_Vanilla}}$}&$\checkmark$&\ding{55}&\ding{55} &79.4$\pm$0.3&\uwave{75.9$\pm$2.0}&\uwave{75.6$\pm$2.1}&\uwave{74.1$\pm$1.4}&\textbf{69.7$\pm$2.9}&\uwave{72.5$\pm$1.7}&\textbf{61.3$\pm$3.6}   \\
    \rowcolor{gray!15}
    \multicolumn{2}{c|}{${\textbf{GPR-GAE}}_{\textbf{GCN\_Self}}$}&$\checkmark$&$\checkmark$&\ding{55} &\uwave{81.6$\pm$0.7}&\textbf{78.5$\pm$1.6}&\textbf{77.3$\pm$0.9}&\textbf{76.6$\pm$1.8}&\uwave{69.0$\pm$1.2}&\textbf{75.0$\pm$1.6}&\uwave{60.8$\pm$1.4}   \\
    \toprule[1pt]
    \multirow{4}{*}{GAT}&Vanilla&\ding{55}&\ding{55}&\ding{55}&76.4$\pm$1.3&50.2$\pm$2.4&50.5$\pm$4.0&29.5$\pm$3.1&36.2$\pm$4.2&19.8$\pm$2.4&21.4$\pm$1.5 \\
    &Self-trained&\ding{55}&$\checkmark$&\ding{55}&79.4$\pm$2.3&53.5$\pm$6.1&53.0$\pm$4.6&34.9$\pm$6.1&36.5$\pm$5.8&24.1$\pm$6.7&24.8$\pm$5.0  \\
    &Adv.-trained&\ding{55}&$\checkmark$&\textit{\footnotesize LRBCD} &\textbf{80.7$\pm$0.8}&67.4$\pm$0.7&62.0$\pm$0.8&59.4$\pm$2.4&48.4$\pm$0.5&50.5$\pm$2.7&32.2$\pm$1.3  \\
    &Adv.-trained&\ding{55}&$\checkmark$&\textit{\footnotesize PRBCD}  &78.8$\pm$1.7&65.2$\pm$4.2&62.1$\pm$3.5&50.6$\pm$3.8&51.2$\pm$4.6&38.1$\pm$6.9&39.0$\pm$7.3   \\ \hline
    \rowcolor{gray!15}
    \multicolumn{2}{c|}{${\textbf{GPR-GAE}}_{\textbf{GAT\_Vanilla}}$}&$\checkmark$&\ding{55}&\ding{55} &76.0$\pm$1.6&\uwave{71.3$\pm$2.1}&\uwave{71.8$\pm$2.4}&\uwave{69.0$\pm$1.6}&\uwave{65.1$\pm$1.1}&\uwave{67.1$\pm$1.3}&\uwave{59.9$\pm$3.0}   \\
    \rowcolor{gray!15}
    \multicolumn{2}{c|}{${\textbf{GPR-GAE}}_{\textbf{GAT\_Self}}$}&$\checkmark$&$\checkmark$&\ding{55} &\uwave{79.4$\pm$2.2}&\textbf{74.9$\pm$2.8}&\textbf{74.0$\pm$3.6}&\textbf{72.8$\pm$3.9}&\textbf{69.3$\pm$5.4}&\textbf{71.7$\pm$4.5}&\textbf{60.5$\pm$5.3}

  \\
    \toprule[1pt]
    \multirow{4}{*}{GPRGNN}&Vanilla&\ding{55}&\ding{55}&\ding{55}&\uwave{81.4$\pm$1.2}&65.6$\pm$0.7&63.0$\pm$1.4&55.6$\pm$0.9&49.4$\pm$1.9&44.2$\pm$2.7&33.1$\pm$3.2 \\
    &Self-trained&\ding{55}&$\checkmark$&\ding{55}&81.1$\pm$1.8&68.5$\pm$1.3&65.5$\pm$2.2&60.2$\pm$2.3&51.8$\pm$1.3&51.9$\pm$2.0&36.8$\pm$1.9  \\
    &Adv.-trained&\ding{55}&$\checkmark$&\textit{\footnotesize LRBCD} &80.0$\pm$1.0&71.2$\pm$3.0&65.4$\pm$1.1&64.8$\pm$2.6&54.0$\pm$1.2&57.7$\pm$4.4&38.2$\pm$2.7   \\
    &Adv.-trained&\ding{55}&$\checkmark$&\textit{\footnotesize PRBCD}  &79.5$\pm$2.0&73.2$\pm$2.8&68.0$\pm$2.4&68.5$\pm$3.1&61.1$\pm$4.4&62.8$\pm$5.6&48.5$\pm$5.3   \\ \hline
    \rowcolor{gray!15}
    \multicolumn{2}{c|}{${\textbf{GPR-GAE}}_{\textbf{GPRGNN\_Vanilla}}$}&$\checkmark$&\ding{55}&\ding{55}&\textbf{81.6$\pm$1.1}&\textbf{77.4$\pm$1.4}&\textbf{77.0$\pm$0.8}&\textbf{76.4$\pm$0.9}&\textbf{70.6$\pm$2.3}&\textbf{75.6$\pm$1.9}&\textbf{64.2$\pm$1.0} \\
    \rowcolor{gray!15}
    \multicolumn{2}{c|}{${\textbf{GPR-GAE}}_{\textbf{GPRGNN\_Self}}$}&$\checkmark$&$\checkmark$&\ding{55}&80.7$\pm$1.6&\uwave{77.2$\pm$1.0}&\uwave{75.9$\pm$1.2}&\uwave{75.5$\pm$1.1}&\uwave{70.2$\pm$1.6}&\uwave{74.7$\pm$1.3}&\uwave{61.6$\pm$1.0} \\

    \toprule[1pt]
    \multirow{4}{*}{APPNP}&Vanilla&\ding{55}&\ding{55}&\ding{55}&82.1$\pm$1.2&66.2$\pm$1.3&64.5$\pm$0.8&56.4$\pm$1.3&50.9$\pm$1.2&46.0$\pm$2.5&32.6$\pm$2.6\\
    &Self-trained&\ding{55}&$\checkmark$&\ding{55}&82.3$\pm$1.8&70.6$\pm$1.0&68.2$\pm$1.1&62.1$\pm$2.8&54.7$\pm$1.0&51.1$\pm$3.2&37.7$\pm$2.0  \\
    &Adv.-trained&\ding{55}&$\checkmark$&\textit{\footnotesize LRBCD} &\uwave{82.4$\pm$1.5}&71.5$\pm$1.3&66.8$\pm$0.6&63.4$\pm$2.8&54.0$\pm$1.4&54.4$\pm$1.8&35.6$\pm$2.3 \\
    &Adv.-trained&\ding{55}&$\checkmark$&\textit{\footnotesize PRBCD}  &81.2$\pm$1.5&71.2$\pm$1.4&67.6$\pm$0.6&64.1$\pm$1.3&55.3$\pm$1.3&54.1$\pm$0.7&37.8$\pm$3.2  \\ \hline
    \rowcolor{gray!15}
    \multicolumn{2}{c|}{${\textbf{GPR-GAE}}_{\textbf{APPNP\_Vanilla}}$}&$\checkmark$&\ding{55}&\ding{55} &81.6$\pm$1.4&\uwave{77.6$\pm$1.8}&\uwave{77.1$\pm$1.5}&\uwave{77.3$\pm$1.7}&\textbf{72.4$\pm$1.9}&\uwave{76.4$\pm$1.8}&\uwave{64.3$\pm$2.7}  \\
    \rowcolor{gray!15}
    \multicolumn{2}{c|}{${\textbf{GPR-GAE}}_{\textbf{APPNP\_Self}}$}&$\checkmark$&$\checkmark$&\ding{55} &\textbf{82.5$\pm$1.7}&\textbf{79.6$\pm$2.3}&\textbf{77.3$\pm$2.9}&\textbf{77.9$\pm$2.2}&\uwave{71.4$\pm$2.3}&\textbf{76.5$\pm$2.7}&\textbf{65.2$\pm$2.0}
     \\
    \bottomrule[1.5pt]
    \end{tabular}
    \label{table:cora_adaptive}
    \vskip 0.1in
\end{table*}

\textbf{Adversarial Training. }For adversarial training, we follow the pseudo-labeling framework proposed by \citet{gosch2024adversarial}. For datasets with unlabeled training nodes, a separate classifier is pre-trained on labeled nodes to assign previously unlabeled nodes with pseudo-labels, expanding the labeled training set. The final model is then trained on the expanded labeled training set, using either PRBCD or LRBCD to create adversarial training samples. Additionally the non-adversarial counterpart, self-training, uses pseudo-labeling without adversarial training, and the expansion of the training set typically increases the robustness of both adversarial and self-training through generalization.

\subsection{Experimental Results}
\textbf{Adaptive attack. }  
We evaluate robustness under adaptive attacks, where the attacker has full knowledge of the model's architecture and parameters, crafting model-specific worst-case perturbations. Our adversarial purification method, GPR-GAE, is tested in two configurations—attached to either vanilla or self-trained classifiers—and compared against vanilla, self-training, adversarial training, and robust GNNs~(EvenNet, SoftMedianGDC). For the baseline classifier, we employ four GNN models: GCN, GAT (\citealt{veličković2018graph}), GPRGNN, and APPNP (\citealt{gasteiger2018combining}). Note that GPRGNN demonstrated state-of-the-art robustness under adversarial training in the prior work, flexibly learning robust propagation pathways in adversarial settings.

Table \ref{table:cora_adaptive} highlights that, on the Cora dataset, GPR-GAE achieves significantly higher robustness compared to all other GNN model variants, including robust GNNs, under adaptive attacks. For example, while adversarially trained GPRGNN (PRBCD)—the most robust method aside from GPR-GAE—achieves 48.5\% test accuracy against PRBCD attacks with $\epsilon = 0.5$, $\text{GPR-GAE}_{\text{GPRGNN\_Vanilla}}$ improves this by 15.7 percentage points through its multi-step refinements, demonstrating its robustness under severe perturbations. Against LRBCD attacks, which restrict local edge modifications proportionally to node degrees, GPR-GAE maintains strong performance, with its largest accuracy drop relative to clean data being only 8.9 percentage points for $\text{GPR-GAE}_{\text{GAT\_Vanilla}}$. This underscores GPR-GAE's significance in defending against more realistic attack scenarios.

In terms of clean accuracy, adversarially trained (PRBCD) GPRGNN sacrifices 1.6 percentage points compared to its non-adversarial counterpart, self-trained GPRGNN. In contrast, GPR-GAE shows minimal trade-offs, with the largest drop being 0.9 points when paired with self-trained GCN. These results demonstrate that GPR-GAE successfully mitigates the trade-off by decoupling accuracy and robustness into separate modules with different specialties, achieving strong robustness against adaptive attacks while preserving clean accuracy. Notably, for GPR-GAE, overall performance against attacks tends to correlate with the initial clean accuracy of the base classifier, rather than the type of GNN or whether it was self-trained or not.

\begin{figure*}[t]
\centering
\hspace{1.5cm}
\begin{minipage}{0.5\textwidth}
    \centering
    \includegraphics[width=\textwidth]{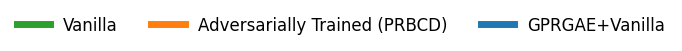}
    \label{fig:label}
\end{minipage}
\vskip -0.2in
\begin{minipage}{0.217\textwidth}
    \centering
    \includegraphics[width=\textwidth]{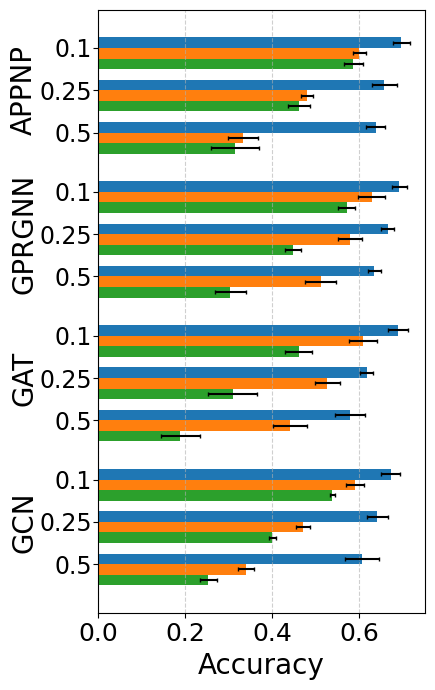}
    \text{\quad \ \ \ (a) Citeseer}
    \label{fig:citeseer}
\end{minipage}
\hfill
\begin{minipage}{0.195\textwidth}
    \centering
    \includegraphics[width=\textwidth]{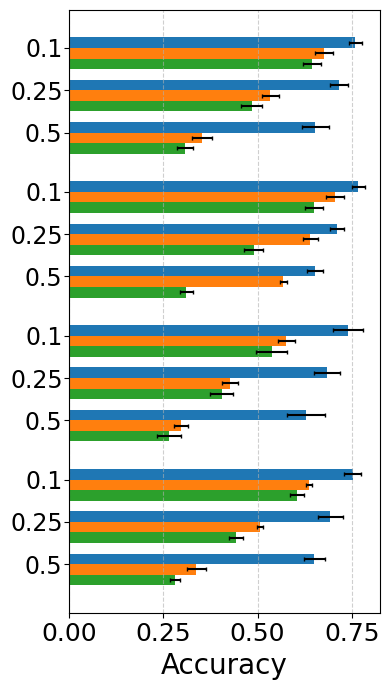}
    \text{\ \ \ (b) Cora ML}
    \label{fig:cora_ml}
\end{minipage}
\hfill
\begin{minipage}{0.195\textwidth}
    \centering
    \includegraphics[width=\textwidth]{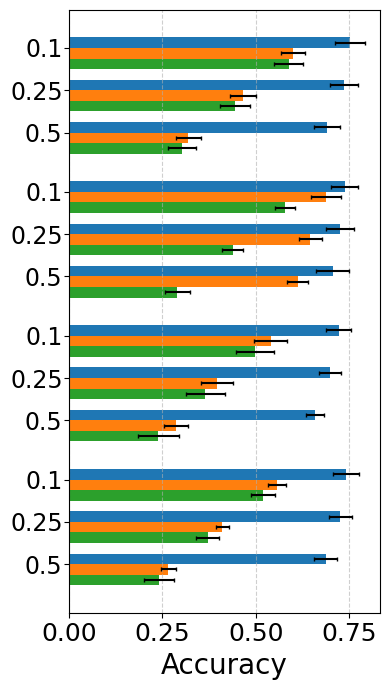}
    \text{\ \ \ (c) Pubmed}
    \label{fig:pubmed}
\end{minipage}
\hfill
\begin{minipage}{0.195\textwidth}
    \centering
    \includegraphics[width=\textwidth]{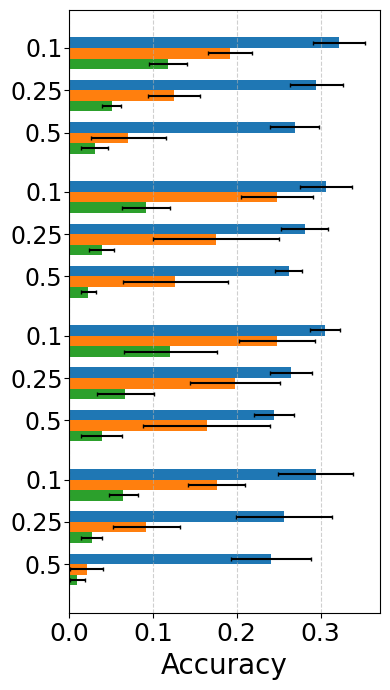}
    \text{\ \ (d) Chameleon}
    \label{fig:chameleon}
\end{minipage}
\vskip 0.1in
\caption{\textbf{Adaptive Attack: }Comparison of test accuracy (\%) for Vanilla, Adversarial Training (PRBCD with \( \epsilon = 0.2 \)), and \( \text{GPR-GAE}_{\text{GNN\_Vanilla}} \) under PRBCD attacks with perturbation budgets \( \epsilon = 0.1, 0.25, 0.5 \) on various datasets and GNN classifiers.
}

\label{fig:comparison}
\vskip 0.1in
\end{figure*}

In Figure \ref{fig:comparison}, we further compare the robustness of $\text{GPR-GAE}_{\text{GNN\_Vanilla}}$ against Vanilla and Adversarial Training (PRBCD) across various datasets, including three homophilic datasets (Citeseer, Cora ML, Pubmed) and a heterophilic dataset (Chameleon). The results show that GPR-GAE consistently achieves superior defense performance across all four GNN models and datasets, demonstrating both its effectiveness and broad applicability.

\begin{table}[t]
\centering
\vskip -0.1in
\caption{\textbf{Non-adaptive attack: }Average test accuracy (\%) in adversarial purification methods and robust GNNs under clean conditions and transfer attacks (PRBCD) from vanilla GCN. ``OOM" indicates ``Out-Of-Memory."}
\vskip 0.1in
\label{tab:clean_perturbed_comparison}
\setlength{\tabcolsep}{4.5pt}
\renewcommand{\arraystretch}{1.1}
\fontsize{6}{8}\selectfont 
\begin{tabular}{c|c|c|c}
    \toprule[1.5pt]
    &\textbf{Citeseer (small)}&\textbf{Pubmed (medium)}&\textbf{OGB-arXiv (large)}\\
    \textbf{Model}&\textbf{Clean / 0.25 / 0.5}&\textbf{Clean / 0.25 / 0.5}&\textbf{Clean / 0.25 / 0.5}\\ \midrule[1pt]
   \textit{ GCN (Vanilla) }&\textit{71.5 / 40.0 / 25.3}&\textit{75.1 / 37.1 / 24.1}&\textit{70.8 / 43.1 / 33.1}\\ \hline \hline 
    EvenNet& \textbf{72.9} / 57.0 / 48.0&\uwave{75.0} / 64.5 / 57.3& 66.2 / \uwave{46.7} / \uwave{39.0}\\ 
    SoftMedianGDC&\uwave{71.7} / 52.1 / 37.5&73.9 / 53.5 / 40.2 &\textbf{70.5} / 44.4 / 35.7\\ \hline
    Jaccard-GCN&69.8 / 50.4 / 40.6&74.3 / 46.4 / 36.0 &67.6 / 43.9 / 35.0\\
    SVD-GCN&66.7 / 59.8 / 46.7&70.7 / 68.0 / 61.4 & OOM \\
    GOOD-AT&68.1 / \uwave{60.8} / \uwave{56.6}&73.0 / \uwave{70.5} / \uwave{68.9} &OOM\\
    \rowcolor{gray!15}${\textbf{GPR-GAE}}_{\textbf{GCN\_Vanilla}}$&71.2 / \textbf{68.7} / \textbf{66.4}&\textbf{75.1} / \textbf{73.5} / \textbf{72.0} &\uwave{69.1} / \textbf{49.2} / \textbf{42.3}\\ \bottomrule[1.5pt]
\end{tabular}
\vskip 0.2in
\end{table}

\textbf{Non-adaptive attack.} Unlike GPR-GAE, prior adversarial purification methods such as SVD-GCN, Jaccard-GCN, and GOOD-AT rely on discrete, non-differentiable purification, preventing direct gradient-based adaptive attacks and each requiring different attack strategies. For general evaluation of adversarial purification methods, we conduct experiments under non-adaptive settings, transferring attacks from adaptively attacked vanilla GCNs using PRBCD. In Table \ref{tab:clean_perturbed_comparison}, we compare $\text{GPR-GAE}_{\text{GCN\_Vanilla}}$ with prior adversarial purification methods and robust GNNs across datasets of varying scales: Citeseer ($N= 2,110$, $|\mathcal{E}|= 3,668$), Pubmed ($N= 19,717$, $|\mathcal{E}|= 44,324$), and OGB-arXiv ($N= 169,343$, $|\mathcal{E}|= 1,157,799$). 

\newpage
Heuristic-based approaches such as Jaccard-GCN and SVD-GCN exhibit either a significant drop in clean accuracy or limited robustness, making them less adaptable across datasets. GOOD-AT, which detects adversarial edges using classifier-based embeddings, provides improved robustness over heuristics but does not fully utilize structural information, falling short in overall compared to GPR-GAE. Moreover, training of GOOD-AT relies on iterative dense matrix optimization for adversarial sample generation, making it impractical for large graphs like OGB-arXiv, a limitation also present in SVD-GCN’s low-rank approximation.

While robust GNNs (EvenNet, SoftMedianGDC) may achieve higher clean accuracy, GPR-GAE maintains superior adversarial robustness, generally with larger margins under stronger perturbations, while preserving clean accuracy close to its attached classifier, vanilla GCN. Additionally, it scales efficiently to large datasets like OGB-arXiv by leveraging mini-batch processing, training with 1\% of edges per epoch, and purifying the test stage graphs in batches—ensuring both memory efficiency and robustness.

\begin{figure*}[t]
\vskip 0.2in
\centering
\begin{minipage}{0.48\textwidth}
\centering
\includegraphics[width=\textwidth]{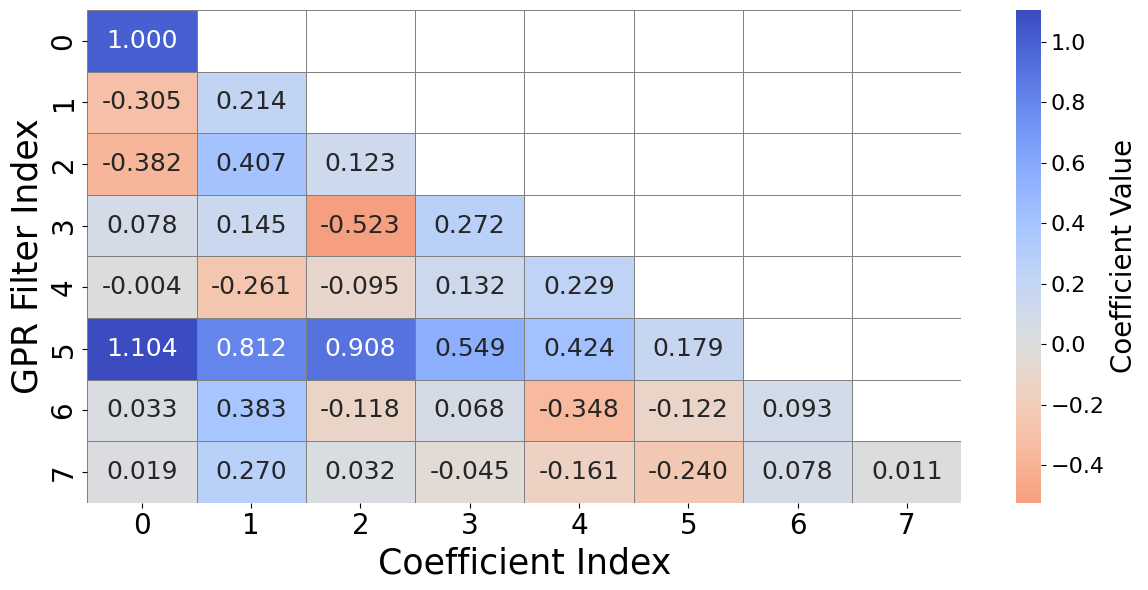}
(a) Cora
\end{minipage}
\hfill
\begin{minipage}{0.48\textwidth}
\centering
\includegraphics[width=\textwidth]{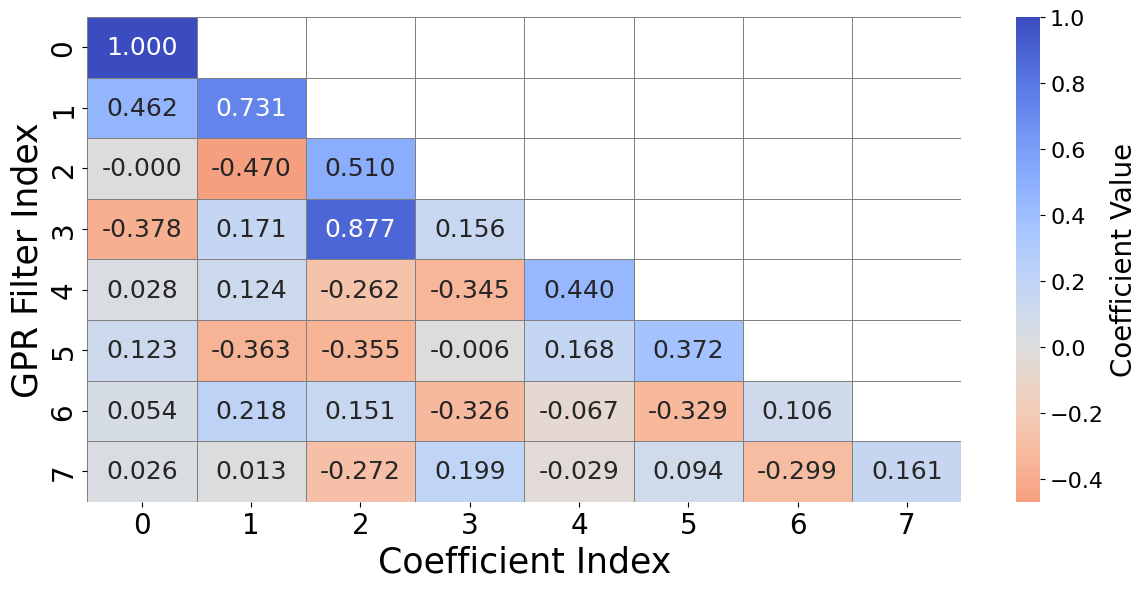}
(b) Chameleon
\end{minipage}

\caption{Visualization of the learned coefficients for each GPR filter in GPR-GAE. For the coefficient value $\gamma_{i,j}$, $i$ indicates the GPR Filter Index ($i$-th GPR Filter) and $j$ indicates the Coefficient Index (for $j$-th hop). We adjust the sign of the values so that the last coefficient values of each GPR filter are positive.}
\label{fig:learned_coef_main}

\end{figure*}

\subsection{Empirical Analysis}
\label{sec:empirical_analysis}

\begin{table}[t]
\centering
\setlength{\tabcolsep}{4.5pt}
\renewcommand{\arraystretch}{1.1}
\scriptsize
\vskip -0.1in
\caption{Empirical Lipschitz constant \( \hat{L} \).}
\label{tab:lipschitz}
\vskip 0.1in
\begin{tabular}{c|c|c|c|c|c}
\toprule
 \textbf{Cora} & \textbf{Cora ML} & \textbf{Citeseer} & \textbf{Pubmed} & \textbf{OGB-arXiv} & \textbf{Chameleon} \\
\midrule
0.3988 & 0.4587 & 0.2788 & 0.4473 & 0.4214 & 0.3238 \\
\bottomrule
\end{tabular}
\end{table}

\textbf{Empirical estimation of Lipschitz constant.}
To support Theorem~\ref{theorem:multistep}, we empirically estimate the local Lipschitz constant of the model on the test stage graph, where the output after a single step purification is denoted as \( f_{\theta}(\textbf{A}, \textbf{X}) \). We randomly sample 100 pairs of perturbed adjacency matrices \( (\textbf{A}_1, \textbf{A}_2) \), each generated by injecting random edges into the test graph with a random perturbation budget under 0.5. For each pair, we compute:
\[\frac{\| f_{\theta}(\textbf{A}_1, \textbf{X}_{\text{test}}) - f_{\theta}(\textbf{A}_2, \textbf{X}_{\text{test}}) \|_F}{\| \textbf{A}_1 - \textbf{A}_2 \|_F},
\]
and report the maximum value across the 100 sampled pairs as the empirical Lipschitz constant \( \hat{L} \). As shown in Table~\ref{tab:lipschitz}, all values remain strictly below 1, supporting the convergent nature of our multi-step purification framework.

\textbf{Unique learned filters.} In Figure~\ref{fig:learned_coef_main}, we visualize the learned coefficients of GPR-GAE under a homophilic graph (Cora) and a heterophilic graph (Chameleon). Under supervised settings, as discussed in \citet{chien2021adaptive}, GPRGNN tends to learn a single GPR filter, with coefficients exhibiting consistent signs on homophilic graphs and fluctuating patterns on heterophilic ones. In contrast, GPR-GAE, trained in our self-supervised framework, learns diverse coefficients regardless of the graph’s homophily. This diversity indicates that each GPR filter selectively includes or suppresses neighborhood information, resulting in distinct multi-scale structural representations. We provide visualization for other datasets in Appendix~\ref{sec:visualize_coef}, where the variation in learned coefficients across datasets further highlights GPR-GAE’s adaptability as a data-driven purification model.

\section{Conclusion}
We propose a self-supervised adversarial purification framework for GNNs, explicitly decoupling accuracy and robustness to mitigate their trade-off. At its core is GPR-GAE, a specialized purifier trained in a self-supervised manner, leveraging multi-scale neighborhood information for enhanced structural learning. Through multi-step purification, GPR-GAE demonstrates state-of-the-art robustness across diverse datasets and attacks, particularly under high perturbations, while preserving clean accuracy, making it a versatile plug-and-play defense module for GNN classifiers. 

\clearpage

\section*{Impact Statement}
This paper presents work whose goal is to advance the field of Machine Learning. While our work may have potential societal implications, none require explicit highlighting in this context.

\section*{Acknowledgements}
This work was supported by the Institute of Information \& Communications Technology Planning \& evaluation (IITP) grant and the National Research Foundation of Korea (NRF) grant funded by the Korean government (MSIT) (RS-2019-II190421, IITP-2025-RS-2020-II201821, RS-2024-00438686, RS-2024-00436936, RS-2023-00225441, RS-2024-00448809, IITP-2025-RS-2024-00360227, RS-2025-02218768, RS-2025-25443718, RS-2025-25442569, RS-2025-02653113). 

\bibliography{ref}
\bibliographystyle{icml2025}

\newpage
\onecolumn
\appendix


\renewcommand{\thefigure}{\thesection.\arabic{figure}}
\renewcommand{\thetable}{\thesection.\arabic{table}}
\setcounter{figure}{0}
\setcounter{equation}{0}
\setcounter{table}{0}
\renewcommand{\theequation}{\thesection.\arabic{equation}}
\renewcommand{\theHequation}{\thesection.\arabic{equation}}
\section{Theoretical Proofs}
\subsection{Proof of Theorem~\ref{theorem:atdecomp}: Decomposition of Adversarial Training}
\label{sec:proof_atdecomp}
We prove the decomposition of the adversarial training objective into accuracy and robustness terms.
\begin{proof}

The learning objective of adversarial training is:
\begin{equation}
\mathcal{L}(\psi) = \max_{G' \in \mathcal{B}(G, \epsilon)} \mathbb{E}_{v \in \mathcal{V}} \big[ \ell(f_\psi(G', v), y_v) \big]
\end{equation}
\begin{equation}
 = \max_{G' \in \mathcal{B}(G, \epsilon)} \frac{1}{|\mathcal{V}|} \sum_{v \in \mathcal{V}} \ell(f_\psi(G', v), y_v).
\end{equation}

We partition \(\mathcal{V}\) into disjoint sets \(\mathcal{V}_\text{unaffected}\) and \(\mathcal{V}_\text{affected}\), representing nodes unaffected and affected by perturbations, respectively. Then:
\begin{equation}
\sum_{v \in \mathcal{V}} \ell(f_\psi(G', v), y_v) = \sum_{v \in \mathcal{V}_\text{unaffected}} \ell(f_\psi(G', v), y_v) + \sum_{v \in \mathcal{V}_\text{affected}} \ell(f_\psi(G', v), y_v).
\end{equation}

For all \(\mathcal{V}_\text{unaffected}\), the predictions remain unchanged under perturbation, and thus the corresponding losses are identical:
\begin{equation}
\sum_{v \in \mathcal{V}_\text{unaffected}} \ell(f_\psi(G', v), y_v) = \sum_{v \in \mathcal{V}_\text{unaffected}} \ell(f_\psi(G, v), y_v).
\end{equation}

For \(\mathcal{V}_\text{affected}\), adversarial perturbations maximize the loss:
\begin{equation}
\sum_{v \in \mathcal{V}_\text{affected}} \ell(f_\psi(G', v), y_v) = \max_{G' \in \mathcal{B}(G, \epsilon)} \sum_{v \in \mathcal{V}_\text{affected}} \ell(f_\psi(G', v), y_v).
\end{equation}

Substituting back:
\begin{equation}
\mathcal{L}(\psi) = \frac{|\mathcal{V}_\text{unaffected}|}{|\mathcal{V}|} \cdot \frac{1}{|\mathcal{V}_\text{unaffected}|} \sum_{v \in \mathcal{V}_\text{unaffected}} \ell(f_\psi(G, v), y_v) + \frac{|\mathcal{V}_\text{affected}|}{|\mathcal{V}|} \cdot \frac{1}{|\mathcal{V}_\text{affected}|} \max_{G' \in \mathcal{B}(G, \epsilon)} \sum_{v \in \mathcal{V}_\text{affected}} \ell(f_\psi(G', v), y_v).
\end{equation}

Let \(\lambda = \frac{|\mathcal{V}_\text{unaffected}|}{|\mathcal{V}|}\). Finally, rewriting as expectations:
\begin{equation}
\mathcal{L}(\psi) = \lambda \cdot \mathbb{E}_{v \in \mathcal{V}_\text{unaffected}} \big[ \ell(f_\psi(G, v), y_v) \big] + (1 - \lambda) \cdot \max_{G' \in \mathcal{B}(G, \epsilon)} \mathbb{E}_{v \in \mathcal{V}_\text{affected}} \big[ \ell(f_\psi(G', v), y_v) \big].
\end{equation}

\end{proof}

\subsection{Proof of Theorem~\ref{theorem:multistep}: Convergence of Multi-Step Purification}
\label{sec:proof_multistep}

\begin{proof}

Building on the locally Lipschitz continuity of $f_\theta$ around $G^{(0)}$ with constant $L \in [0,1)$, we consider two cases based on the choice of $\alpha$.

When $\alpha = 1$, the update reduces to direct application of $f_\theta$:
\begin{equation}
G^{(t+1)} = f_\theta(G^{(t)}).
\end{equation}

Since $f_\theta$ is locally Lipschitz continuous with constant $L \in [0,1)$, it is a contraction mapping.

When $0 < \alpha < 1$, we can define the update operator $h_\theta$ as:
\begin{equation}
h_\theta(G) = (1-\alpha)G + \alpha f_\theta(G).
\end{equation}
For any $G_1, G_2$ in the neighborhood, we have
\begin{equation}
\begin{aligned}
\|h_\theta(G_1) - h_\theta(G_2)\| 
&= \|(1-\alpha)(G_1 - G_2) + \alpha(f_\theta(G_1) - f_\theta(G_2))\| \\
&\leq \|(1-\alpha)(G_1 - G_2)\| + \|\alpha(f_\theta(G_1) - f_\theta(G_2))\|\\
&\leq (1-\alpha)\|G_1 - G_2\| + \alpha L \|G_1 - G_2\| \\
&= \left(1-\alpha(1-L)\right)\|G_1 - G_2\|.
\end{aligned}
\end{equation}
Since $L<1$ and $\alpha>0$, the contraction factor $1-\alpha(1-L)$ lies in $[0,1)$, so $h_\theta$ is also a contraction mapping.

In both cases, the purification update defines a contraction mapping.  
Thus, by Banach's Fixed-Point Theorem, the sequence $(G^{(t)})_{t \geq 0}$ converges linearly to a unique fixed point $G^*$.

\end{proof}

\section{Derivation of Computational Complexity}
\label{sec:complexity_derivation}
\subsection{Node Encoding in GPR-GAE}
The computational complexity of node encoding in GPR-GAE is derived as follows:

\begin{itemize}
    \item \textbf{Initial Node Representation:} Computing the initial node embedding \(\mathbf{H}^{(0)}\) through linear transformation requires:
    \[
    \mathcal{O}(N \cdot Z^2).
    \]

    \item \textbf{Hop Representations:} The \(k\)-th hop representation is computed iteratively using:
    \[
    \mathbf{H}^{(k)} = \tilde{\mathbf{A}}_{ns} \mathbf{H}^{(k-1)},
    \]
    where \(\tilde{\mathbf{A}}_{ns}\) is the normalized adjacency matrix. Each sparse matrix multiplication costs \(\mathcal{O}(|\mathcal{E}| \cdot Z)\). Since this process is repeated for all \(K\) hops (from \(\mathbf{H}^{(1)}\) to \(\mathbf{H}^{(K)}\)), the total cost for computing all hop representations is:
    \[
    \mathcal{O}(K \cdot |\mathcal{E}| \cdot Z).
    \]

    \item \textbf{GPR Filter Representations:} For each of the $k$-th GPR filter, the representation \(\mathbf{H}_{\theta_k}\) is computed as:
    \[
    \mathbf{H}_{\theta_k} = \sum_{m=0}^{k} \gamma_{k,m} \mathbf{H}^{(m)},
    \]
    where \(\mathbf{H}^{(m)} \in \mathbb{R}^{N \times Z}\). Summing all \(k\) hop representations for $k$-th GPR filter costs:
    \[
    \mathcal{O}(k \cdot N \cdot Z).
    \]
    Summing over all \(K+1\) filters, the total cost becomes:
    \[
    \mathcal{O}\left(\sum_{k=0}^{K} k \cdot N \cdot Z \right) \approx \mathcal{O}(K^2 \cdot N \cdot Z).
    \]
\end{itemize}

\textbf{Total Node Encoding Complexity:} Combining all components:
\[
\mathcal{O}(N \cdot Z^2 + K \cdot |\mathcal{E}| \cdot Z + K^2 \cdot N \cdot Z).
\]
In real-world graphs, where $N \ll |\mathcal{E}|$, and given $K < 10$, the final node encoding complexity can be simplified to the dominant term:
\[
\mathcal{O}(K \cdot |\mathcal{E}| \cdot Z).
\]

\subsection{Multi-Step Purification Complexity}
The multi-step purification process iteratively updates the graph structure and includes three key components:

\begin{itemize}
    \item \textbf{Node Encoding:} Each purification step first encodes each node, which costs:
    \[
    \mathcal{O}(K \cdot |\mathcal{E}| \cdot Z).
    \]

    \item \textbf{Edge Encoding:} Each encoded edge embedding with dimension \(Z\) is computed by transforming the concatenation of two encoded node embeddings. The combined input dimension is \(2(K+1) \cdot Z\), resulting in a complexity of:
    \[
    \mathcal{O}(2(K+1) \cdot |\mathcal{E}| \cdot Z^2).
    \]
    Simplifying the constants give:
    \[
    \mathcal{O}(K \cdot |\mathcal{E}| \cdot Z^2).
    \]

    \item \textbf{Edge Decoding:} The edge decoder transforms the edge embeddings from \(Z\) dimensions back to a single scalar. This costs:
    \[
    \mathcal{O}(|\mathcal{E}| \cdot Z).
    \]
\end{itemize}

\textbf{Complexity per Step:} Summing the costs of node encoding, edge encoding, and edge decoding, the overall complexity for a single purification step is:
\[
\mathcal{O}(K \cdot |\mathcal{E}| \cdot Z + K \cdot |\mathcal{E}| \cdot Z^2 + |\mathcal{E}| \cdot Z).
\]
The complexity simplifies to:
\[
\mathcal{O}(K \cdot |\mathcal{E}| \cdot Z^2).
\]

\textbf{Total Multi-step Purification Complexity:} Assuming a fixed number of purification steps \(T\), the total complexity for multi-step purification becomes:
\[
\mathcal{O}(T \cdot K \cdot |\mathcal{E}| \cdot Z^2).
\]
With \(T\) treated as a small constant of 5, this simplifies further to:
\[
\mathcal{O}(K \cdot |\mathcal{E}| \cdot Z^2).
\]

\section{Experimental Settings}
\begin{table}[h!]
    \centering
    \caption{Datasets}
    \label{tab:dataset-summary}
    \vskip 0.1in
    \begin{tabular}{@{}lrrrrc@{}}
        \toprule
        \textbf{Dataset} & \textbf{Nodes} & \textbf{Edges} & \textbf{Features} & \textbf{Classes} & \textbf{Homophily Rate} \\
        \midrule
        Cora        & 2,708  & 5,278  & 1,433  & 7 & 0.80 \\
        Cora-ML     & 2,810  & 7,981  & 2,879  & 7 & 0.78 \\
        Citeseer    & 2,110  & 3,668  & 3,703  & 6 & 0.73 \\
        Chameleon   & 890  & 8,854 & 2,325  & 5 & 0.23 \\
        Pubmed      & 19,717 & 44,324 & 500    & 3 & 0.80 \\
        OGB-arXiv   & 169,343 & 1,157,799 & 128  & 40 & 0.65 \\
        \bottomrule
    \end{tabular}
\end{table}

In this section, we summarize the experimental settings, including GPR-GAE, other models, attacks, and adversarial training. For adversarial training and models other than GPR-GAE, we mostly follow \citet{gosch2024adversarial}, the prior work we use as the baseline for adversarial training. All experiments are conducted on an NVIDIA A100 (80GB) GPU. However, it is worth noting that GPR-GAE can be trained and applied to datasets, including OGB-arXiv, using an NVIDIA RTX A5000 (24GB). From an attacker's perspective, however, adaptively attacking classifiers attached to GPR-GAE using gradient-based methods introduces significant memory constraints. This is because these attacks require storing all edge embeddings from GPR-GAE for gradient computation, including the blocks of edges sampled at each epoch in PRBCD attack.

While GPR-GAE can be applied to OGB-arXiv by purifying the graph in batches, the gradient-based attacks cannot leverage the batching strategy in their gradient computations for generating attacks. As a result, adaptive attacks on GPR-GAE become unscalable (which could be seen as an advantage on our part), and we evaluate robustness on OGB-arXiv only in transfer attack settings. Furthermore, PGD attacks~\cite{xu2019topology} are particularly unscalable, as they require computing gradients for all possible edges, making them impractical to attack GPR-GAE even on small-scale graphs like Citeseer.

\subsection{Models}
\begin{itemize}
    \item GCN: Two-layer GCN with 64 hidden units. For OGB-arXiv, a three-layer GCN with 256 hidden units. While training, a dropout of 0.5 is applied.
    \item GAT: Two-layer GAT with 64 hidden units. Single attention head and 0.5 dropout during training.
    \item GPRGNN: Two-layer MLP with 64 hidden units for the initial feature transformation. The GPR coefficients are randomly initialized while a total of $K$ = 10 diffusion steps are applied with 0.2 MLP dropout during training.
    \item APPNP: Two-layer MLP with 64 hidden units for the initial feature transformation. A total of $K$ = 10 diffusion steps are applied with 0.5 MLP dropout during training. For the coefficients, $\gamma_{K} = (1-\alpha)^K$ and $\gamma_{l} = \alpha(1-\alpha)^l$ for $l<K$, with $\alpha$ fixed as 0.1.
    \item GPR-GAE: We set $K = 7$, $Z_1 = 128$, and $Z_2 = 512$, using the ELU activation function. The GPR coefficients are initialized randomly. During training, we apply an MLP dropout rate of $0.7$ for node encoding, while no dropout is used for edge encoding or decoding. For the OGB-arXiv dataset, we use mini-batch training with a batch ratio $\lambda = 0.01$, sampling 1\% of the original and inserted edges for each training epoch. During the test stage, adjacency matrix predictions are performed in batches, which does not affect the performance purification process. Full-batch training and purification are applied to all other datasets.
    \item Jaccard-GCN: The edges are filtered based on Jaccard dissimilarity with a threshold of 0.01. For the model, we use the DeepRobust library \cite{li2020deeprobust}, with modifications for ogbn-arxiv, where we use a three-layer GCN with 256 hidden units and prune edges using cosine dissimilarity with a threshold 0.4, as the dataset has non-binary attributes.
    \item SVD-GCN: Adversarial perturbations are filtered by using a low-rank approximation. For the model, we use the DeepRobust library.
    \item GOOD-AT: We follow the settings in \href{https://github.com/likuanppd/GOOD-AT}{https://github.com/likuanppd/GOOD-AT}. 20 MLP detectors with 64 hidden units are trained with a learning rate of 0.01 and weight decay of 0.0001 using the ADAM optimizer. Each detector is trained on different adversarial samples generated through attacking a GCN classifier. A threshold of 0.1 is used, where edges with scores that exceed the threshold are detected as Out-Of-Distribution and pruned.
    \item EvenNet: Two-layer MLP with 64 hidden units (256 for OGB-arXiv) for initial feature transformation. A total of $K = 10$ is used with PPR initialization $\alpha = 0.1$. We use a dropout rate of 0.5 during training.
    \item SoftMedianGDC: The default configurations of \citet{geisler2021robustness} are used. Two-layer GDC with 64 hidden units, temperature $T=0.2$ for the SoftMedian aggregation, Personalized PageRank diffusion coefficent $\alpha=0.15$, and $k=64$ for sparsification. For OGB-arXiv, a three-layer GDC with 256 hidden units is used with $T=5$ and $\alpha = 0.1$.

\end{itemize}

\subsection{Training settings}
GPR-GAE is trained using the ADAM optimizer with a learning rate of $0.01$ and a weight decay of $0.0001$. Training is conducted for $2000$ epochs. We create $10$ validation edge sets by randomly inserting negative edges with $p = 0.3 \cdot i$. Here, $i$ indicates the $i$-th validation edge set. We evaluate the model every epoch by computing the mean AUC and mean AP across the $10$ validation edge sets. The model with the best performance is selected based on these metrics. When training the classifiers, a maximum of 3000 epochs is used for training, using the Adam optimizer with a learning rate of 0.01, weight decay of 0.001, and tanhMargin loss. An early stop method is used with a patience of 200 epochs. For adversarial training, the first ten epochs are trained without adversarial examples.

For attack parameters in adversarial training, we use 20 epochs with no early stopping. The block size is 1 million, and the loss type is tanhMargin. For PRBCD, the learning factor is 100, while for LRBCD, the learning factor is 20 times that of PRBCD. When training, a budget of $\epsilon = 0.2$ is used.
\subsection{Choice of Hyperparameters of Attacks for Evaluation}
For attack parameters in the evaluation stage, we use a learning factor of 100 with 400 epochs. In the case of PRBCD, 100 additional epochs are performed with a decaying learning rate and without block resampling.
\begin{figure}[H]
\centering
\begin{minipage}{0.45\textwidth}
\centering
\includegraphics[width=\textwidth]{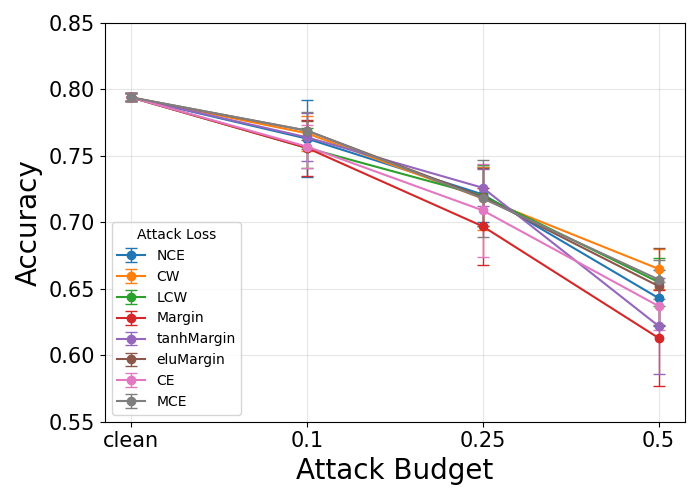}
(a) GCN
\end{minipage}
\hfill
\begin{minipage}{0.45\textwidth}
\centering
\includegraphics[width=\textwidth]{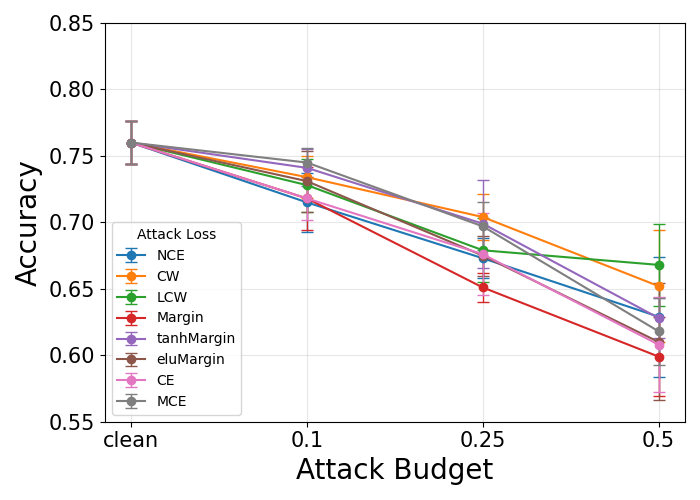}
(b) GAT
\end{minipage}

\vspace{0.5cm} 

\begin{minipage}{0.45\textwidth}
\centering
\includegraphics[width=\textwidth]{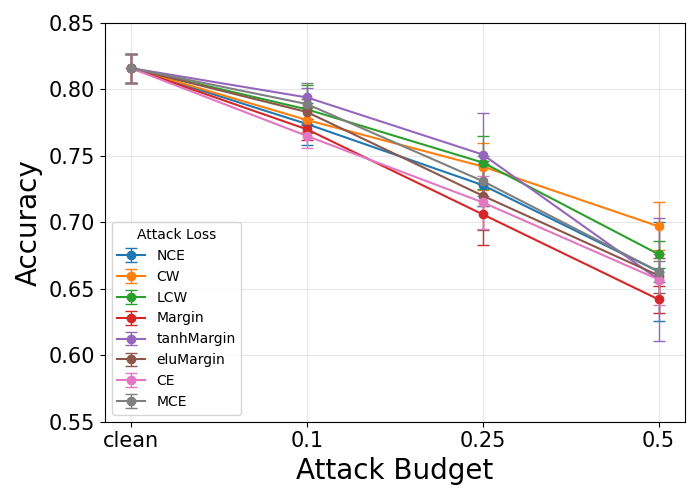}
(c) GPRGNN
\end{minipage}
\hfill
\begin{minipage}{0.45\textwidth}
\centering
\includegraphics[width=\textwidth]{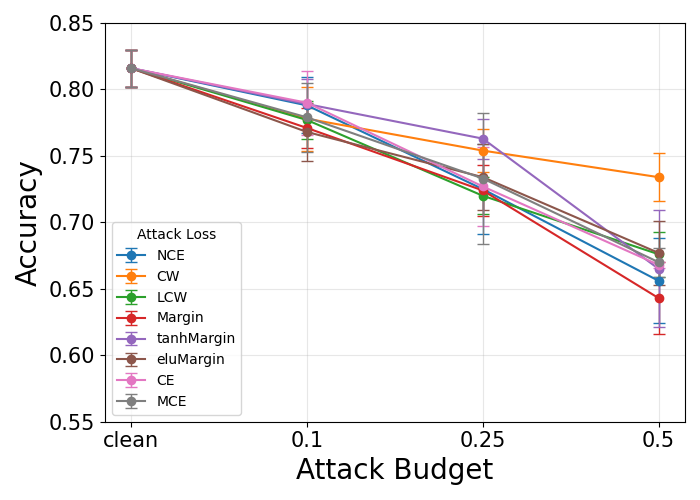}
(d) APPNP
\end{minipage}
\caption{The accuracy of GPR-GAE attached to Vanilla (a) GCN, (B) GAT, (c) GPRGNN, (d) APPNP in Cora dataset against adaptive PRBCD attack for budgets $\epsilon = 0 \text{ (clean)}$, $\epsilon = 0.1$, $\epsilon = 0.25$, $\epsilon = 0.5$ using various attack loss types.}
\label{fig:losstype}

\end{figure}
Given the differing mechanisms of adversarial purification and adversarial training methods in defending against attacks, the optimal hyperparameters for attacks may vary between these approaches. To determine effective attack hyperparameters for each defense strategy, we conduct a grid search across eight loss types and block sizes of [10K, 50K, 250K], under an attack budget of $\epsilon = 0.5$. For adversarial purification methods, we select vanilla GPRGNN combined with our GPR-GAE as the representative model. For non-GPR-GAE methods, we select the adversarially trained (PRBCD) GPRGNN as the representative model.

\textbf{Attack Loss Type.} Figure \ref{fig:losstype} presents the accuracy of four GNN models combined with GPR-GAE's adversarial purification method across eight loss types with a block size of 10K on the Cora dataset. In contrast to \citet{gosch2024adversarial}, where the tanhMargin loss type is used for attacks, the Margin loss type consistently delivers the most effective attack performance across varying attack budgets on GPR-GAE. Consequently, we use the Margin loss for adaptive attacks on GPR-GAE and the tanhMargin loss for attacks on other methods.
\\ \\
\textbf{Block Size.} For the PRBCD attack, we use a block size of 50K for PubMed and 10K for the other datasets across all defense methods. For the LRBCD attack on Cora, we use a block size of 10K on GPR-GAE while using a block size of 250K on the rest of the methods. Additionally, for a large-scale dataset, OGB-arXiv, we exceptionally use a block size of 3 million.

\section{Visualization of the Learned GPR Coefficients}
\label{sec:visualize_coef}

\begin{figure}[H]
\vskip 0.2in
\centering
\begin{minipage}{0.45\textwidth}
\centering
\includegraphics[width=\textwidth]{coracoef.png}
(a) Cora

\end{minipage}
\hfill
\begin{minipage}{0.45\textwidth}
\centering
\includegraphics[width=\textwidth]{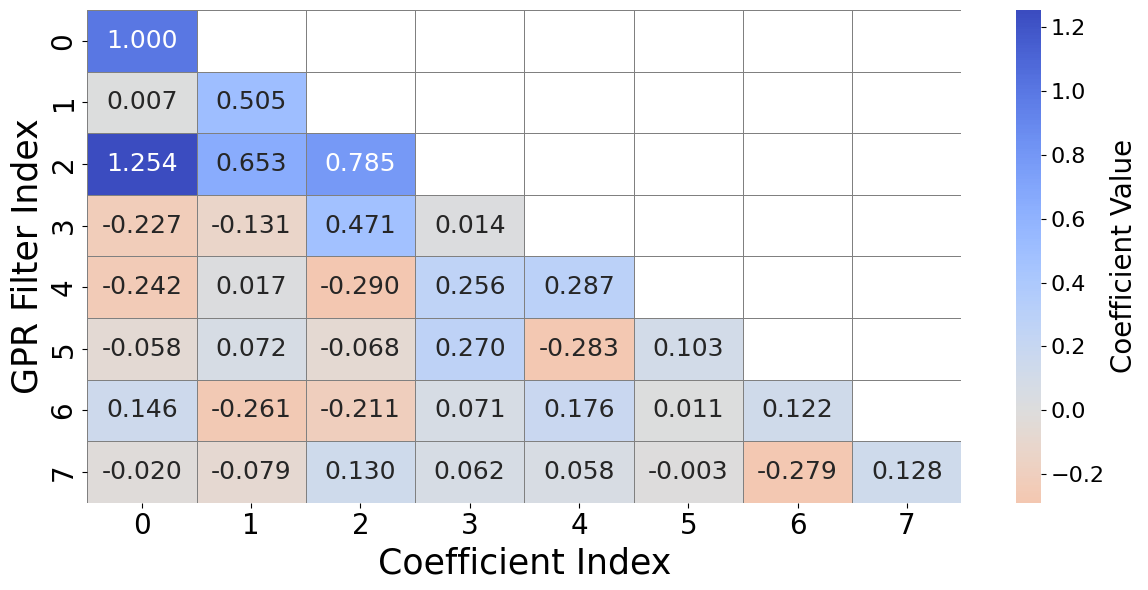}
(b) Citeseer
\end{minipage}

\vspace{0.5cm} 

\begin{minipage}{0.45\textwidth}
\centering
\includegraphics[width=\textwidth]{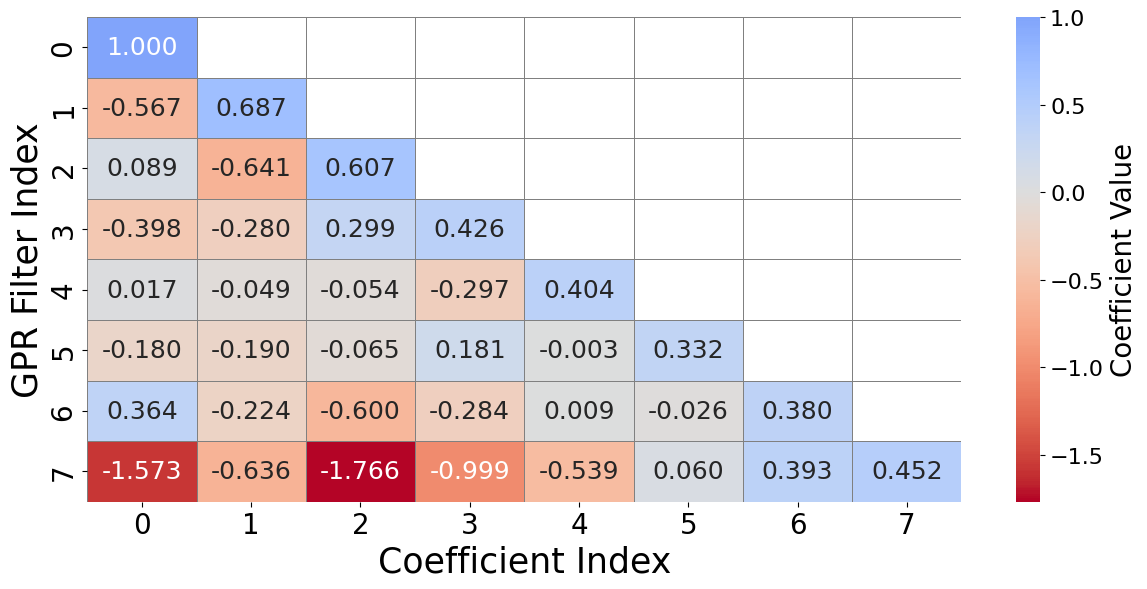}
(c) Cora ML
\end{minipage}
\hfill
\begin{minipage}{0.45\textwidth}
\centering
\includegraphics[width=\textwidth]{chameleoncoef.png}
(d) Chameleon
\end{minipage}
\vspace{0.5cm} 

\begin{minipage}{0.45\textwidth}
\centering
\includegraphics[width=\textwidth]{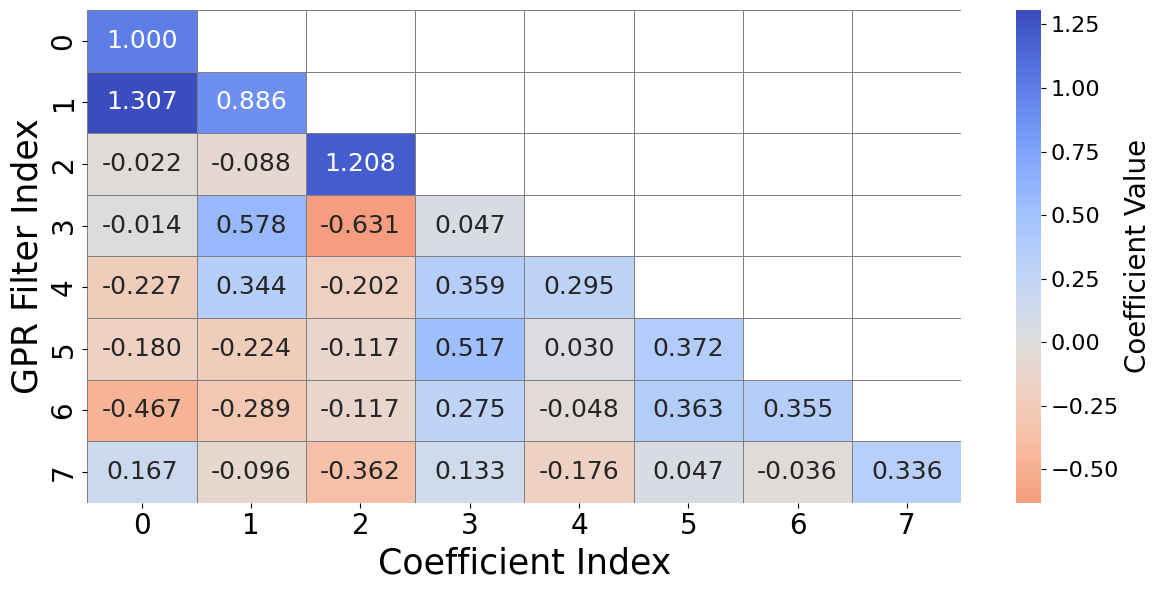}
(e) Pubmed
\end{minipage}
\hfill
\begin{minipage}{0.45\textwidth}
\centering
\includegraphics[width=\textwidth]{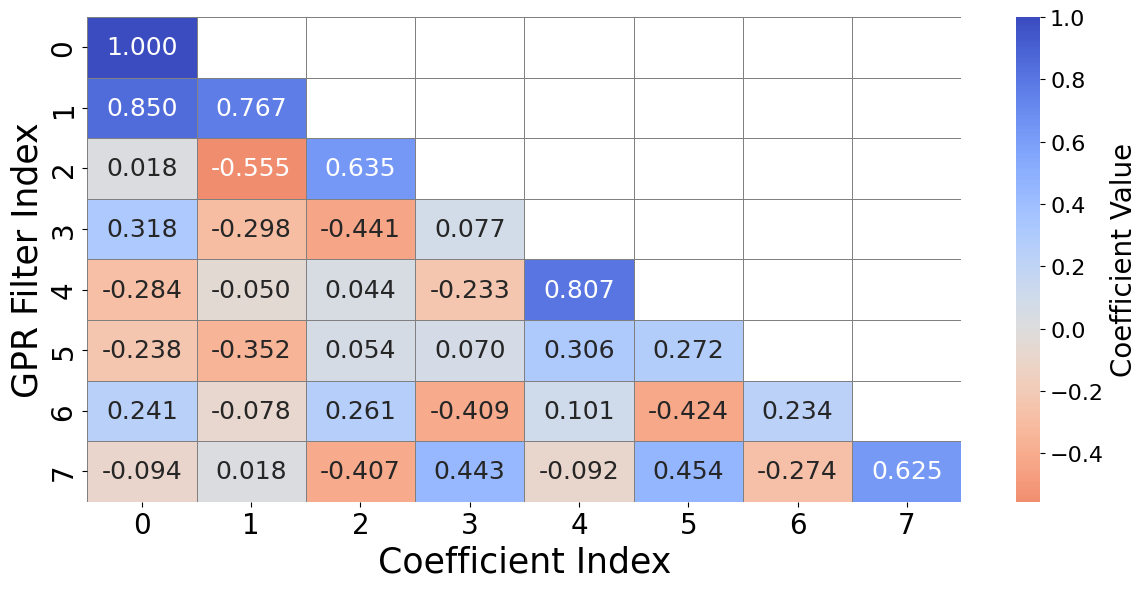}
(f) OGB-arXiv
\end{minipage}
\caption{Visualization of the learned coefficients for each GPR filter in GPR-GAE when trained for adversarial purification. For the coefficient value $\gamma_{i,j}$, $i$ indicates the GPR Filter Index ($i$-th GPR Filter) and $j$ indicates the Coefficient Index (for $j$-th hop). We adjust the sign of the values so that the last coefficient values of each GPR filter are positive.}
\label{fig:learned_coefs}

\end{figure}

In Figure \ref{fig:learned_coefs}, we present the learned GPR coefficients for each of the GPR filters in GPR-GAE throughout the experiments when $K=7$. The coefficient \( \gamma_{i,j} \) controls the propagation of neighborhood information \( j \)-hops away for the \( i \)-th GPR filter, allowing for fine-grained adjustment of neighborhood contributions. In \citet{chien2021adaptive}, the GPR coefficients in GPRGNN exhibit consistent signs in homophilic graphs, while fluctuating in heterophilic graphs like Chameleon, which have more complex graph structures. In contrast, the learned GPR coefficients in GPR-GAE generally display greater diversity regardless of their homophilic or heterophilic nature, efficiently regulating the inclusion or exclusion of neighborhood information. This enables the model to form distinct and unique neighborhood representations for each GPR filter with different neighborhood sizes. Furthermore, the coefficients demonstrate clear distinctions across various datasets, highlighting GPR-GAE’s ability to adaptively learn appropriate neighborhood representations for each dataset, making it a powerful data-driven approach.
\section{Algorithms}

\begin{algorithm}[h!]
   \caption{Training of GPR-GAE}
   \label{alg:training}
\begin{algorithmic}[1]
   \STATE {\bfseries Input:} Graph \( G = (\mathbf{A}, \mathbf{X}) \), perturbation budget \( (p, q, \eta) \), model parameters \( \theta \), maximum epochs, validation data \( \{G_{\text{val}}\} \)
   \STATE {\bfseries Output:} Trained GPR-GAE purifier \( f_\theta \)
   \STATE Initialize parameters \( \theta \)
   \STATE Set \( \text{best\_val\_metric} = -\infty \)
   \FOR{epoch = 1 to maximum epochs}
      \STATE Sample perturbed graph \( G' = (\mathbf{A}', \mathbf{X}) \) from \( \mathcal{B}(G, (p, q, \eta)) \)
      \STATE Predict adjacency matrix \( \hat{\mathbf{A}} = f_\theta(G') \)
      \STATE Compute total loss \( \mathcal{L} \)
      \STATE Update parameters \( \theta \) using gradient descent: 
      \[
      \theta \leftarrow \theta - \nabla_\theta \mathcal{L}
      \]
      \STATE Evaluate validation metric \( \text{val\_metric} \) on \( \{G_{\text{val}}\} \)
      \IF{\( \text{val\_metric} > \text{best\_val\_metric} \)}
         \STATE Update \( \text{best\_val\_metric} \gets \text{val\_metric} \)
         \STATE Save model parameters \( \theta^* \gets \theta \)
      \ENDIF
   \ENDFOR
   \STATE {\bfseries Return:} Best trained purifier \( f_\theta \) with parameters \( \theta^* \)
\end{algorithmic}
\end{algorithm}

\begin{algorithm}[h!]
   \caption{Multi-Step Purification with GPR-GAE}
   \label{alg:purification}
\begin{algorithmic}[1]
   \STATE {\bfseries Input:} Perturbed graph \( G' = (\mathbf{A}', \mathbf{X}) \), trained purifier \( f_\theta \), step size \( \alpha \), terminal threshold \( \tau \), GNN classifier \( f_{\psi} \)
   \STATE {\bfseries Output:} Node predictions from classifier \( f_{\psi} \)
   \STATE Initialize \( \mathbf{A}^{(0)} = \mathbf{A}' \)
   \FOR{$t = 0$ {\bfseries to} max\_steps$-1$}
      \STATE Predict purified adjacency matrix \( \hat{\mathbf{A}}^{(t)} = f_\theta(\mathbf{A}^{(t)}, \mathbf{X}) \)
      \STATE Compute purification direction:
      \[
      \Delta \mathbf{A}^{(t)} = \hat{\mathbf{A}}^{(t)} - \mathbf{A}^{(t)}
      \]
      \STATE Update adjacency matrix:
      \[
      \mathbf{A}^{(t+1)} = \mathbf{A}^{(t)} + \alpha \cdot \Delta \mathbf{A}^{(t)}
      \]
      \IF{\( \frac{\|\Delta \mathbf{A}^{(t)}\|}{\|\mathbf{A}^{(t)}\|} \leq \tau \)}
         \STATE Break
      \ENDIF
   \ENDFOR
   \STATE Set final purified adjacency matrix \( \mathbf{A}^*\) as the last updated adjacency matrix
   \STATE Pass \( \mathbf{A}^* \) to the GNN classifier \( f_{\psi} \)
   \STATE {\bfseries Return:} Node predictions \( \hat{Y} = f_{\psi}(\mathbf{A}^*, \mathbf{X}) \)
\end{algorithmic}
\end{algorithm}
\newpage
\section{Ablation Studies}
\label{sec:ablation}

\begin{table}[h!]
\centering
\small
\caption{Variations of GPR-GAE and their explanations}
\label{tab:variations}
\vskip 0.1in

\label{tab:variations_explanations}
\renewcommand{\arraystretch}{1.3} 
\setlength{\tabcolsep}{6pt} 
\begin{tabular}{l|l}
\toprule
\textbf{Variation} & \textbf{Explanation} \\
\midrule
GPR-GAE & The original version of the model we use throughout the experiments.  \\ \hline
w/o multi-step purification & Single-step adjacency matrix prediction, with discretization of edges using threshold 0.1.  \\
w/o GPR filters & We use a 2-layered GCN for the node encoder. \\
with self-loops & Instead of $\tilde{\mathbf{A}}_{ns}$, we use $\tilde{\mathbf{A}}_s$, the normalized adjacency matrix with self-loops. \\
w/o edge reweight & Among the three perturbation methods in training, we remove the edge reweight process. \\
smaller training perturbation budget & reduced edge injection ratio $p=1.5$ to 0.5, smaller training perturbation sample space. \\
w/o higher order dependencies & Reduced $K=7$ to $K=2$, with smaller neighborhood boundary coverage. \\
\bottomrule
\end{tabular}
\end{table}

\begin{figure}[h!]
    \centering
    \begin{minipage}[t]{0.592\textwidth}
        \centering
        \includegraphics[width=\textwidth]{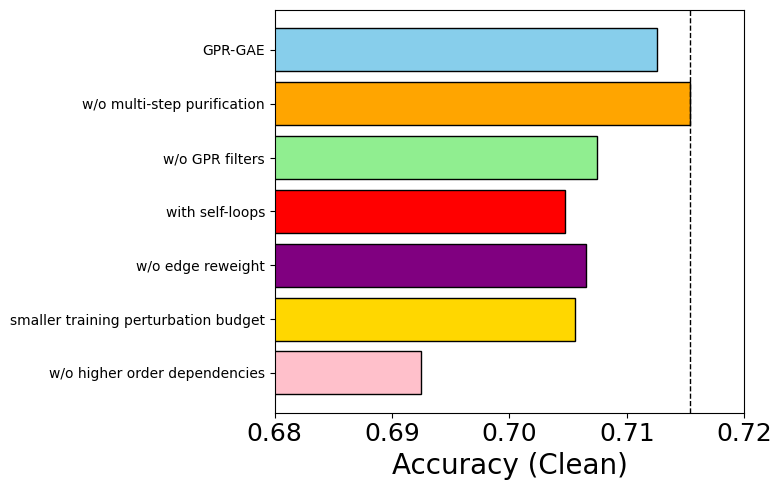}
    \end{minipage}
    \hfill
    \begin{minipage}[t]{0.370\textwidth}
        \centering
        \includegraphics[width=\textwidth]{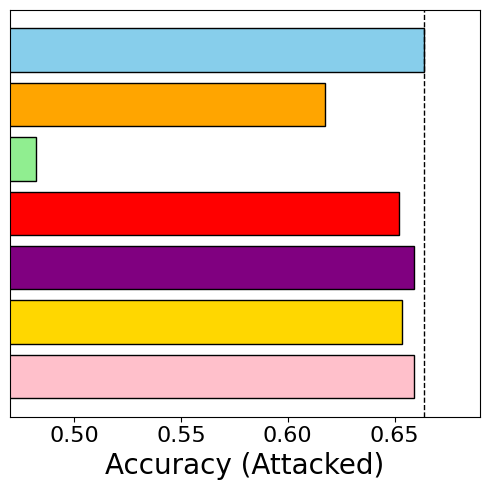} 
    \end{minipage}
    \caption{Comparison of test accuracy for variations of GPR-GAE on Citeseer. The left figure shows accuracy under clean conditions, while the right figure illustrates accuracy under the non-adaptive PRBCD attack on a vanilla GCN. }
    \label{fig:ablation}
\end{figure}

In this section, we conduct ablation studies regarding GPR-GAE. We use vanilla GCN as the classifier, attached with the variations of GPR-GAE in Table~\ref{tab:variations}, and compare accuracy under clean and transfer attacks (PRBCD, $\epsilon = 0.5$) from vanilla GCN. Figure~\ref{fig:ablation} shows that, in general, the original variation of GPR-GAE that we use throughout the experiments achieves the best overall results. Exceptionally for the variation with no multi-step purification, it achieves a better clean accuracy compared to the original. However, under adversarial scenarios, it experiences a relatively large drop of accuracy compared to the original variation with multi-step purification, showcasing the effectiveness of the continuous and gradual graph refinements against severe perturbations.

In the case of the variation without GPR filter, which indicates replacing the node encoder with a two-layer GCN, it performs poorly, especially under adversarial attacks. This shows that the inevitable smoothing effect coming from the static neighborhood propagation scheme makes it highly ineffective under attacks, falling short in distinguishing between the clean and the adversaries. The original variation shows better performance compared to the rest of the other variations, benefiting from the non-self looped normalized adjacency matrix with more precise and distinctive control over each neighborhood propagation, learning more generalized purification directions with the edge reweight method, learning purification directions over a broader sample space using a relatively large perturbation budget, and the higher order dependencies from the larger neighborhood boundary coverage.

\section{Empirical Validation of Structural Encoding}
\label{sec:motivation_extend}

To further extend the motivation presented in Section \ref{sec:gprmotivation} and validate GPR-GAE's structural encoding capabilities, we conducted experiments to identify existent \textbf{1-hop} and \textbf{2-hop connections} in inductive settings. This evaluation compares the GPR-GAE node encoder against three baseline node encoders: GCN, GraphSAGE, and GIN. For each node encoder, two separate edge encoders and decoders are assigned, focusing respectively on either 1-hop connections or 2-hop connections.

In 1-hop prediction, positive samples are direct connections, while negatives are random non-connections. This task primarily evaluates the model’s ability to capture proximity between node pairs. In 2-hop prediction, positives are two-hop paths, and negatives are direct connections without two-hop paths. This task goes beyond simple proximity, requiring the model to distinguish between the two types of relationships that are both proximal.

\subsection*{Task Definitions}
\begin{itemize}
    \item \textbf{1-Hop Prediction:} Distinguish between \(C_{1+}\) and \(C_{1-}\)
    \begin{itemize}
        \item \textit{Positive Samples} (\(C_{1+}\)): Direct connections in the graph.
        \item \textit{Negative Samples} (\(C_{1-}\)): Randomly selected non-connections of the same size as the positives.
    \end{itemize}
    
    \item \textbf{2-Hop Prediction:} Distinguish between \(C_{2+}\) and \(C_{2-,\text{dir}}\)
    \begin{itemize}
        \item \textit{Positive Samples} (\(C_{2+}\)): Connections formed by two-hop paths in the graph.
        \item \textit{Negative Samples} (\(C_{2-, \text{dir}}\)): Direct connections that are not two-hop.
        \item \textit{Negative Samples 2} (\(C_{2-}\)): Randomly selected non-two-hop connections of the same size as the positives.
    \end{itemize}
\end{itemize}

\subsection*{Setup}
\begin{itemize}
    \item \textbf{Inductive Split:} Nodes were split into 80\% for training, 10\% for validation, and 10\% for testing. Connections in validation (\(C_{\text{val}}\)) and testing (\(C_{\text{test}}\)) included at least one validation or test node, ensuring these connections were not exposed during training.
    
    \item \textbf{Training:} The node encoder and both edge encoder-decoder pairs are trained jointly as part of a unified model. We use ADAM optimizer with a learning rate of 0.01 and a weight decay of 0.00005.
    \begin{itemize}
        \item \textit{First Edge Encoder/Decoder:}
        \begin{itemize}
            \item Positive train connections (\(C_{1+}\)): Direct connections in the training graph.
            \item Negative train connections (\(C_{1-}\)): Random non-connections in the training graph.
            \item Loss: Binary Cross-Entropy (BCE) loss on \(C_{1+}\) and \(C_{1-}\).
        \end{itemize}
        \item \textit{Second Edge Encoder/Decoder:}
        \begin{itemize}
            \item Positive train connections (\(C_{2+}\)): Two-hop connections in the training graph.
            \item Negative train connections (\(C_{2-, \text{dir}}\)): Direct connections that are not two-hop.
            \item Negative train connections 2 (\(C_{2-}\)): Random non-two-hop connections.
            \item Loss: Combined BCE loss on \(C_{2+}\), \(C_{2-, \text{dir}}\), and \(C_{2-}\).
        \end{itemize}
    \end{itemize}
    
    \item \textbf{Validation:}
    \begin{itemize}
        \item Validation connections (\(C_{1+, \text{val}}, C_{1-, \text{val}}, C_{2+, \text{val}}, C_{2-, \text{dir, val}}\)) are similarly defined but included at least one validation node in each connection.
        \item Performance was evaluated by summing the AUC across both {1-hop: (\(C_{1+, \text{val}}, C_{1-, \text{val}}\))} and {2-hop: (\(C_{2+, \text{val}}, C_{2-, \text{dir, val}}\))}.
        \item The model with the highest combined score was selected.
    \end{itemize}
    
    \item \textbf{Testing:}
    \begin{itemize}
        \item Test connections (\(C_{1+, \text{test}}, C_{1-, \text{test}}, C_{2+, \text{test}}, C_{2-, \text{dir, test}}\)) followed the same definitions, with at least one test node in each connection.
        \item Performance was evaluated by the AUC across both {1-hop: (\(C_{1+, \text{test}}, C_{1-, \text{test}}\))} and {2-hop: (\(C_{2+, \text{test}}, C_{2-, \text{dir, test}}\))}. In particular, for the 2-hop prediction task, the objective is to evaluate the model's ability to differentiate between two types of connections, both of which are considered proximal, as they connect nodes within a relatively short range of 2-hops.
    \end{itemize}
\end{itemize}

\subsection*{Results}
\begin{figure}[htbp]
    \centering
    \begin{minipage}[t]{0.45\textwidth}
        \centering
        \includegraphics[width=\textwidth]{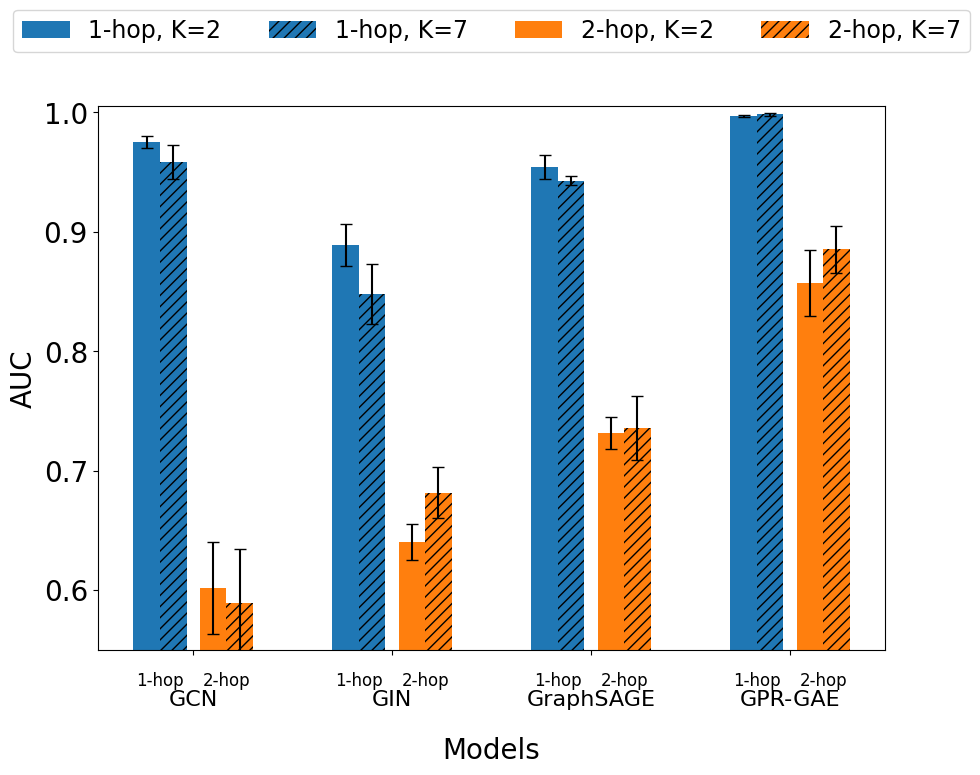}
        (a) Cora
    \end{minipage}
    \hfill
    \begin{minipage}[t]{0.45\textwidth}
        \centering
        \includegraphics[width=\textwidth]{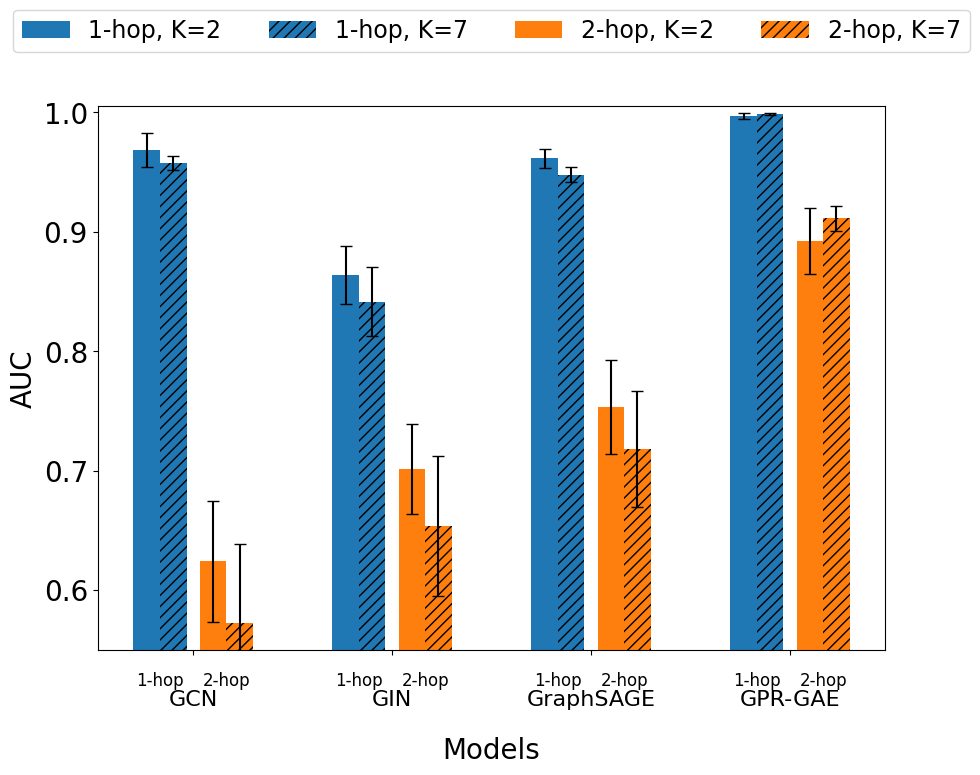} 
        (b) Citeseer
    \end{minipage}
    \caption{AUC for existent 1-hop and 2-hop link identification using different GAE models}
    \label{fig:existent_prediction}
\end{figure}

Results in Figure \ref{fig:existent_prediction} demonstrate that GPR-GAE achieves the highest AUC scores in both 1-hop and 2-hop prediction tasks for \( K = 2 \) and \( K = 7 \). Unlike traditional GAE architectures, GPR-GAE exhibits a consistent performance improvement as \( K \) increases, showcasing its ability to fully leverage higher-order dependencies while mitigating oversmoothing. Moreover, GPR-GAE significantly outperforms traditional GAEs in the 2-hop prediction task. This highlights the advantage of its unique multiscale neighborhood representations, which are derived from distinctively stored outputs of GPR filters. These representations enable GPR-GAE to better model complex relationships, such as \( C_{2+, \text{test}} \) and \( C_{2-, \text{dir, test}} \), moving beyond simply encoding proximal connections. Consequently, the demonstrated results form a robust foundation for tasks such as adversarial purification that require advanced structural capabilities.


\end{document}